%% file: main.tex
\documentclass[conference]{IEEEtran}
\usepackage{times}

\usepackage[numbers,sort&compress]{natbib}
\usepackage{mathtools}
\usepackage{multicol}
\usepackage[bookmarks=true,hidelinks]{hyperref}
\usepackage{multirow} 
\usepackage{xcolor}  
\usepackage{graphicx}
\usepackage{amsmath} 
\usepackage{amsthm} 
\usepackage{comment}
\usepackage{soul}
\usepackage{xspace}
\usepackage{stfloats}
\setul{0.3ex}{0.1ex} 
\usepackage[normalem]{ulem}
\useunder{\uline}{\ul}{} 
\usepackage[resetlabels]{multibib}
\newcites{app}{References} 
\usepackage{cuted}
\usepackage{caption}
\let\labelindent\relax % This clears the existing definition
\usepackage{enumitem}
\usepackage{cleveref}
\usepackage{booktabs}
\crefname{section}{Sec.}{Secs.}
\crefname{figure}{Fig.}{Figs.}
\Crefname{section}{Section}{Sections}
\crefname{proposition}{Prop.}{Props.}

\usepackage{algorithm}
\usepackage{algpseudocode}
\usepackage[font=small]{caption}

\newtheorem{definition}{Definition}
\newtheorem{assumption}{Assumption}
\newtheorem{problem}{Problem}
\newtheorem{theorem}{Theorem}   
\newtheorem{lemma}{Lemma}       

\newtheorem{proposition}{Proposition}
\newtheorem{remark}{Remark}
\usepackage{amssymb}
\usepackage{etoolbox}
\makeatletter
\patchcmd{\@makecaption}
  {\scshape}
  {}
  {}
  {}
\makeatother

\newcommand{\SE}{\mathrm{SE}}
\newcommand{\ad}{\operatorname{ad}}
\def\name{CORD\xspace}

\begin{document}

% paper title
\title{Distributed Pose Graph Optimization via Continuous Riemannian Dynamics}

% You will get a Paper-ID when submitting a pdf file to the conference system
% \author{Author Names Omitted for Anonymous Review. Paper-ID [841]}

\author{\authorblockN{Jaeho Shin,
Maani Ghaffari, and
Yulun Tian}
\authorblockA{
University of Michigan\\
Ann Arbor, MI 48109, USA\\ 
Email: \{jaehos,maanigj,yulunt\}@umich.edu}}

% avoiding spaces at the end of the author lines is not a problem with
% conference papers because we don't use \thanks or \IEEEmembership

% for over three affiliations, or if they all won't fit within the width
% of the page, use this alternative format:
% 
%\author{\authorblockN{Michael Shell\authorrefmark{1},
%Homer Simpson\authorrefmark{2},
%James Kirk\authorrefmark{3}, 
%Montgomery Scott\authorrefmark{3} and
%Eldon Tyrell\authorrefmark{4}}
%\authorblockA{\authorrefmark{1}School of Electrical and Computer Engineering\\
%Georgia Institute of Technology,
%Atlanta, Georgia 30332--0250\\ Email: mshell@ece.gatech.edu}
%\authorblockA{\authorrefmark{2}Twentieth Century Fox, Springfield, USA\\
%Email: homer@thesimpsons.com}
%\authorblockA{\authorrefmark{3}Starfleet Academy, San Francisco, California 96678-2391\\
%Telephone: (800) 555--1212, Fax: (888) 555--1212}
%\authorblockA{\authorrefmark{4}Tyrell Inc., 123 Replicant Street, Los Angeles, California 90210--4321}}

\maketitle

\begin{abstract}
We present a framework for distributed Pose Graph Optimization (PGO) by formulating the problem as a second-order continuous-time dynamical system evolving on Lie groups.
By modeling pose variables as massive particles subject to damping, the equilibrium points of the resulting Riemannian dynamics coincide with first-order critical points of the original PGO problem.
Using the governing damped Euler--Poincar\'e equations and a semi-implicit geometric integrator, we design an optimization algorithm that generalizes existing algorithms such as Riemannian gradient descent and Gauss--Newton.
In multi-robot settings, we present a fully distributed and parallel method based on block-diagonal mass and damping matrices, 
where each robot solves an ordinary differential equation for its own poses with minimal communication overhead.
Moreover, modeling both state and velocity enables principled neighbor prediction that significantly improves convergence under delayed communication.
Theoretically, we present an analysis and establish sufficient condition that ensures energy dissipation
under the employed geometric discretization scheme.
Experiments on benchmark PGO datasets demonstrate that the proposed solver achieves superior performance compared to state-of-the-art distributed baselines in both synchronous and asynchronous regimes.
\end{abstract}

\IEEEpeerreviewmaketitle

\section{Introduction}

Multi-robot collaborative SLAM (CSLAM) is a critical capability for enabling scalable and globally consistent situational awareness in GPS-denied environments. By allowing multiple robots to jointly estimate their trajectories and build shared maps from noisy relative measurements, CSLAM significantly extends the spatial and temporal scope achievable by single-robot systems. 
Recent advances have led to robust, fielded multi-robot SLAM systems operating over large teams and long durations, even under intermittent communication and limited bandwidth \cite{ebadi_present_2024,tian2022kimera,liu_slideslam_2024,lajoie_swarm-slam_2024,schmuck_covins_2021}.
At the core of these systems lies pose graph optimization (PGO), which serves as the primary back-end for fusing intra- and inter-robot measurements into a coherent global estimate.

To address scalability, communication, and privacy constraints, a growing body of recent work has investigated distributed pose graph optimization based on distributed optimization \cite{tian_distributed_2021, mcgann_asynchronous_2024,fan_majorization_2024} or probabilistic message passing frameworks \cite{murai_robot_2024}.
Compared to centralized PGO solvers, these approaches eliminate single points of failure and allow flexible deployment across robot teams.
However, existing distributed PGO methods face persistent limitations in convergence speed and robustness. Generic solvers based on first-order primal (e.g., \cite{tian_asynchronous_2020}) or primal–dual (e.g., \cite{mcgann_asynchronous_2024}) iterations are broadly applicable but often converge slowly in large, loosely coupled graphs, particularly under asynchronous updates and stale neighbor information. 
Other state-of-the-art methods achieve acceleration by carefully exploiting problem-specific algebraic structure \cite{fan_majorization_2024}. Frequently, this design requires substantial domain expertise and tightly couples the resulting algorithm to specific problem formulations.

% \mgj{Listing contributions is better. Enumerate. Leave some white space and make it easier to read and review. Reviewers will return the favor!}
To address the aforementioned limitations, 
we propose a novel distributed PGO solver based on continuous-time Riemannian dynamics (\cref{fig:teaser}), which naturally introduces inertia and neighbor state prediction to enables accelerated convergence even under asynchronous optimization.
% grounded in continuous-time Riemannian dynamics (\cref{fig:teaser}). 
% The resulting second-order dynamics naturally introduce inertia and neighbor state prediction that enables accelerated convergence even under asynchronous optimization.
Our approach draws inspiration from the revered field of geometric mechanics~\cite{bloch1996euler}, which has played a central role in geometric control and motion planning (e.g., \cite{duong2024port,teng2025convex}) yet has so far remained largely unexplored in the context of SLAM back-end optimization.
% We reformulate PGO as a continuous-time dynamical system evolving on a product of Lie groups.
By studying the variational characterization of this system, we derive the governing equations of motion and show that appropriate choices of kinetic energy and geometric discretization lead to effective update rules that can be executed fully in a parallel and distributed manner. 
\begin{figure}[t]
    \centering
    \includegraphics[width=1\columnwidth,trim=0 100 0 10, clip]{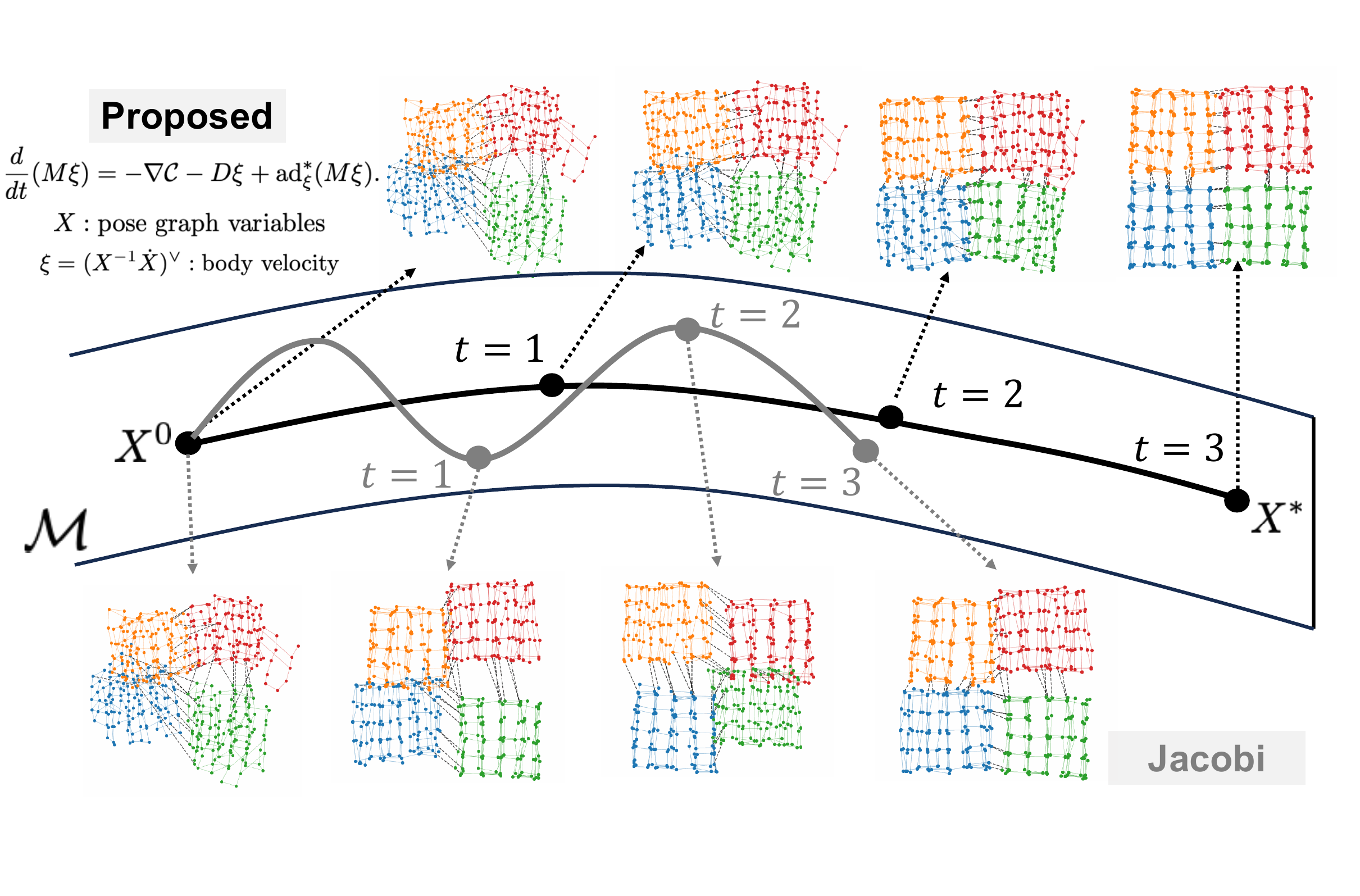}
    \caption{The proposed approach formulates pose graph optimization (PGO) as a \textbf{continuous-time dynamical system} evolving on the direct product $\mathcal{M}$ of $\SE(3)$ Lie groups governed by a damped Euler-Poincar\'{e} equation. Using a suitable geometric integration scheme, we obtain optimization trajectory (shown in solid black) that converges to a first-order critical point of PGO. 
    Each point along the trajectory represents a full pose graph estimate, where colors correspond to different robots.
    By modeling inertia and damping, the proposed approach suppresses oscillatory behavior and converges significantly faster than existing methods such as Jacobi (\textcolor{gray}{gray trajectory}).}
    \label{fig:teaser}
\end{figure}

\textbf{Contributions.} 
In summary, we present three contributions,
\begin{enumerate}[leftmargin=*,label=(\roman*)]
    \item We present \textbf{\name} (\underline{C}ontinu\underline{o}us-time \underline{R}iemannian \underline{D}ynamics), a {general distributed PGO solver} based on a second-order ordinary differential equation (ODE) evolving on Lie Groups, achieving acceleration even under delayed communication;
    \item We give a \textbf{theoretical analysis} that precisely characterizes the energy dissipation properties of the discretized Riemannian dynamics;
    \item \textbf{Extensive experimental results} show that \name achieves state-of-the-art performance in synchronous PGO with both chordal and geodesic metrics, while maintaining robustness under diverse asynchronous conditions through the proposed neighbor state prediction.
\end{enumerate}

\section{Related Work}

\textbf{Distributed Pose Graph Optimization.}
DDF-SAM~\cite{cunningham_ddf-sam_2010,cunningham_ddf-sam_2013} is a pioneering work that introduced a fully distributed SLAM back-end in which agents exchange compact summaries of their local factor graphs obtained via Gaussian elimination.
DGS~\cite{choudhary_distributed_2017} proposes a two-stage iterative optimization procedure that uses successive over-relaxation to solve chordal rotation initialization and a single Gauss-Newton update.
AMM-PGO~\cite{fan_majorization_2024} introduces a majorization–minimization framework that supports Nesterov acceleration and adaptive restart strategies in fully distributed settings.
In \cite{tian_distributed_2021}, RBCD and its accelerated variant RBCD++ performs nonlinear block-coordinate descent with first-order convergence guarantees, and attains certifiable global optimality for distributed PGO when used within a semidefinite relaxation framework \cite{rosen_se_sync_2019,briales2017cartan}.
IRBCD~\cite{li_distributed_2024} improves convergence and scalability through enhanced graph partitioning strategies.
In \cite{tian_asynchronous_2020}, Tian~et~al. develops a variant of RBCD for asynchronous optimization. The resulting ASAPP solver enjoys provable first-order convergence under bounded communication delay albeit with slower convergence speed.
Recently, primal-dual iterations based on the Alternating Direction Method of Multipliers (ADMM) has gained increasing attention \cite{mcgann_asynchronous_2024,mcgann_imesa_2024,banninger2023cross}.
MESA~\cite{mcgann_asynchronous_2024} and its incremental variant iMESA \cite{mcgann_imesa_2024} applies on-manifold consensus ADMM to distributed SLAM, and demonstrates flexible communication schedules and empirical robustness to intermittent connectivity.
Gaussian belief propagation (GBP) \cite{murai_robot_2024} proposes a probabilistic message-passing framework that enables fully asynchronous operation and naturally provides estimates of marginal uncertainty, though the convergence behavior can be sensitive to graph topology and noise characteristics.
To address this issue, recent work \cite{liu2026cbs} introduces a hybrid method that integrates maximum a-posteriori optimization with belief propagation and uses Hellinger-distance-based damping to guarantee convergence on loopy graphs.
In contrast to prior methods, we propose a unifying framework for distributed PGO based on continuous-time Riemannian dynamics. To the best of our knowledge, our approach is the first to achieve acceleration even under asynchronous communication.

% (Work in progress) Placement of this work: 
% \begin{itemize}
%     \item Propose unifying framework based on continuous-time Riemannian dynamics: recover GD, Jacobi, momentum
%     \item First accelerated algorithm under asynchronous communication, significantly faster than SOTA like MESA
% \end{itemize}

\textbf{Dynamical System Perspective in Optimization.} 
Su~et al.~\cite{su2016differential} is among the first to study Nesterov's accelerated gradient method \cite{nesterov1983method} using a continuous-time second-order ODE.
Subsequent work~\cite{wibisono2016variational,wilson2021lyapunov} generalize this perspective by introducing a Bregman Lagrangian whose Euler–Lagrange equations define a family of continuous-time dynamics, and show that appropriate discretizations of these dynamics recover existing accelerated optimization methods.
Recent methods~\cite{betancourt2018symplectic,duruisseaux2023practical} further place this framework in a Hamiltonian setting and show that symplectic integration \cite{hairer2006structure} is essential for preserving acceleration when passing from continuous to discrete time.
Parallel efforts \cite{hauswirth2016projected,duruisseaux2022accelerated} study the continuous-time dynamics perspective for optimization over Riemannian manifolds.

In robotics, \cite{golyanik2016gravitational,jauer2018efficient,golyanik2019accelerated,ali2018nrga,zhao2022graphreg, yang2021dynamical} study the related problem of point cloud registration from the dynamical system perspective.
Golyanik et al.~\cite{golyanik2016gravitational} formulate registration as a damped N-body dynamical system, where template points evolve under gravitational forces induced by the reference points.
This approach is extended in \cite{golyanik2019accelerated} that introduces a nonlinear least squares optimization over an altered potential and uses Barnes–Hut tree to accelerate point interaction.
Yang et al.~\cite{yang2021dynamical} generalizes \cite{golyanik2016gravitational,jauer2018efficient,golyanik2019accelerated} and models a broader range of pose estimation problems across point, primitive, and category-level settings.
This work extends dynamical systems–based optimization to distributed pose graph optimization, a significantly higher-dimensional and poorly-conditioned estimation problem, and explicitly accounts for communication delays that fundamentally challenge existing distributed solvers.

% (Work in progress) Placement of this work: we are inspired by and generalizes the continuous-time dynamical approach to PGO:
% \begin{itemize}
%     \item Much higher-dimensional vs. point cloud registration
%     \item A poorly conditioned problem known to be challenging to distributed optimization
%     \item Account for and resilient under communication delay
% \end{itemize}

% \begin{figure}[t]

%     \centering

%     \includegraphics[width=1\columnwidth]{example-image}

%     \caption{Dummy figure using mwe package. \color{blue} Change iteration to t = \mgj{Potentially good for the first figure.}}

% \end{figure}

\section{Problem Formulation}
\label{sec:problem}
% Unless stated otherwise, lowercase and uppercase letters denote vectors and matrices, respectively. \mgj{we use greek lowercase for matrices in Lie algebra. I used to do this but then realized global notation is not good. It's better to define each variable explicitly when used like $a \in \mathbb{R}^2$. So we can just remove this line and be explicit about variable definitions. Not using bold is also cooler and more mathy. It forces a correct and explicit definition in place instead of memorization.}
We use $\mathcal{G}$ and $\mathfrak{g}$ to denote a matrix Lie group and its associated Lie algebra.
In particular, we use $\SE(3)$ to denote the special Euclidean group in three dimensions, and $\SE(3)^n$ to denote the direct product of $n$ copies of $\SE(3)$.
% For a matrix $A$, $A^\top$ denotes its
% transpose, and $\mathrm{tr}(\cdot)$ denotes the trace operator. The identity
% and zero matrices are denoted by $I$ and $0$, with dimensions omitted when
% clear from context. 
% For a scalar-valued function $\mathcal{C}$ defined on a Riemannian manifold
% $\SE(3)^N$, $\nabla \mathcal{C}$ denotes its Riemannian gradient.\YT{Cite a Riemmanian optimization textbook.}

% \subsection{Geometric Preliminaries}
% \label{sec:problem:preliminary}

% In this section, we describe two fundamental principles governing dynamic systems evolving on matrix Lie Groups. To this end, we adopt a variational approach based on Hamilton's principle, which provides a unified framework for deriving both conservative and dissipative dynamics. 

Let $X \equiv X(t) \in \mathcal{G}$ denote a trajectory evolving on $\mathcal{G}$. We formulate the dynamics using the body velocity $\xi$, defined by left-translating the time derivative $\dot{X}$ to the Lie algebra $\mathfrak{g}$ associated with $\mathcal{G}$:
\begin{equation}
\xi := X^{-1}\dot{X} \in \mathfrak{g}.
\end{equation}

In classical mechanics, the evolution of a system is described using a
\emph{Lagrangian} $\mathcal{L} = \mathcal{T} - \mathcal{C} $, defined as the difference between kinetic energy $\mathcal{T}$ and potential
energy $\mathcal{C}$.
When the configuration space is a Lie group, this formulation extends naturally to equations of motion that evolve directly on the manifold and respect its geometric structure \cite{bloch1996euler}.
This perspective serves as the foundation of the proposed method introduced in later sections.

\subsection{Pose Graph Optimization (PGO)}
A pose graph is represented by a directed graph 
$G = (\mathcal{V}, \mathcal{E})$, 
where each node $u \in \mathcal{V}$ corresponds to an unknown robot pose 
$X_u \in \SE(3)$, and each edge $(u,v) \in \mathcal{E}$ encodes a noisy relative
pose measurement $X_{uv} \in \SE(3)$ between nodes $u$ and $v$.
The objective of PGO is to estimate the set of poses
$\{X_u\}_{u \in \mathcal{V}}$ that best agree with the relative measurements.
In this paper, we study two widely used formulations of PGO
corresponding to geodesic and chordal distance metrics on $\SE(3)$, which
together cover the objective functions underlying state-of-the-art solvers such as GTSAM~\cite{gtsam} and SE-Sync~\cite{rosen_se_sync_2019}.

\begin{problem}[Pose Graph Optimization]\label{Prob1:PGO}
\begin{equation}
\begin{aligned}
&\min_{X \in \SE(3)^{N}} 
\quad 
\mathcal{C}(X) 
\coloneq 
\sum_{(u,v) \in \mathcal{E}} 
\| r_{uv}(X_u, X_v) \|^2_{\Omega_{uv}}, \\
& r_{uv}(X_u, X_v) =
\begin{cases}
\log\!\left( X_{uv}^{-1} X_u^{-1} X_v \right), 
& \text{(Geodesic)} ,\\[4pt]
X_u X_{uv} - X_v, 
& \text{(Chordal)} .
\end{cases}
\end{aligned}
\end{equation}
Here, $X \in \SE(3)^N$ collects all $N \coloneq |\mathcal{V}|$ pose variables and $\Omega_{uv} \succ 0$ denotes the information matrix associated
with edge $(u,v) \in \mathcal{E}$. The weighted squared norm is defined as
$\| r_{uv} \|^2_{\Omega_{uv}} \coloneq r_{uv}^\top \Omega_{uv} r_{uv}$ 
for the case of the geodesic residual.
For the chordal residual, $r_{uv}$ is matrix-valued and the weighted norm is
defined as
$\| r_{uv} \|_{\Omega_{uv}}^2
\coloneq
\mathrm{tr}\!\left( r_{uv} \, \Omega_{uv} \, r_{uv}^\top \right)$, where $\Omega_{uv} = \mathrm{blkdiag}(\omega_{R_{uv}} I_3,\; \omega_{t_{uv}})$ weights the rotational and translational components \cite{briales2017cartan,rosen_se_sync_2019}.
\end{problem}

In distributed PGO, the pose graph is naturally partitioned
across robots, with each robot estimating a subset of pose variables corresponding to its own trajectory. 
Edges include both intra-robot measurements
(e.g., odometry and local loop closures) and inter-robot measurements arising from loop closures established when different robots observe the same place.
The objective in distributed PGO is to solve Problem~\ref{Prob1:PGO}
collectively and without a central coordinator, using only local computation and limited information exchange between neighboring robots; see \cref{fig:qualitative}.
% \begin{remark}
% It is straightforward to verify that the cost function $\mathcal{C}(X)$ is left-invariant~\cite{barrau2016invariant}.
% \end{remark}

\section{Proposed Method}
\label{sec:approach}

In this section, we develop the proposed \name framework for distributed PGO.
\cref{sec:approach:riemannian_dynamics} converts PGO to a continuous-time dynamical system evolving on the product of $\SE(3)$ manifolds, and show that its equilibrium states correspond to first-order critical points (FOCPs) of the PGO problem.
\cref{sec:approach:discretization} then derives the optimization algorithm resulting from geometric semi-implicit integration.
Lastly, \cref{sec:approach:distributed} specializes to fully distributed optimization that remains robust under asynchronous communication.

\subsection{From PGO to Riemannian Dynamics}
\label{sec:approach:riemannian_dynamics}
The majority of optimization algorithms for solving Problem \ref{Prob1:PGO} iteratively linearizes the residual $r_{ij}$ and obtain a descent direction toward a FOCP. 
In contrast, our algorithm first formulates a continuous-time dynamical system with a potential energy based on the PGO cost $\mathcal{C}$ and a suitably chosen kinetic energy $\mathcal{T}$.
% in which the PGO cost function $\mathcal{C}$ serves as the potential energy and we also introduce appropriate kinetic energy. 
Define the body velocity for each pose as $\xi_i := (X_i^{-1}\dot{X})^\vee \in \mathbb{R}^6$ where the operator $(\cdot)^\vee$ maps a Lie algebra element to a vector in $\mathbb{R}^6$. 
Given a generalized inertia (``mass'') matrix ${M} \in \mathbb{R}^{6N\times 6N}$ to be specified later,
the total kinetic energy is defined as $\mathcal{T} := \frac{1}{2}\xi^\top M \xi$, where $\xi = [\xi_1^\top, \cdots, \xi_N^\top]^\top$. 
Denoting the Lagrangian of the system as $\mathcal{L} := \mathcal{T}-\mathcal{C}$, we can then derive the equation of motion for the system using the Euler-Poincaré formula \cite{bloch1996euler}. 
Under the principle of least action, this system has equilibrium
configurations corresponding to FOCPs of the potential
energy $\mathcal{C}$.
Nevertheless, in the absence of dissipative forces, the total energy of the system is conserved, which results in persistent oscillations rather than convergence. To ensure convergence to the FOCP, we introduce an additional positive definite damping matrix $D \succ 0$ to enable energy dissipation.
The equation of motion can then be derived using the Lagrange-d'Alembert principle \cite{bloch1996euler}, as shown in the appendix in detail.

\begin{proposition}[Damped Euler-Poincar\'{e} Equation]
\label{prop:damped_ep}
Given positive definite mass matrix $M$ and damping matrix $D$,
the system dynamics on $\SE(3)^N$ are governed by,
    \begin{equation}\label{eq:ODE}
        \frac{d}{dt}(M\xi) = -\nabla \mathcal{C} - D\xi + \ad_{\xi}^*(M\xi).
    \end{equation}
\end{proposition}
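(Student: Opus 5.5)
We derive \eqref{eq:ODE} from the Lagrange--d'Alembert principle applied to the reduced Lagrangian. The Lagrangian $\mathcal{L}(X,\xi)=\frac12\xi^\top M\xi-\mathcal{C}(X)$ is written in terms of the body velocity $\xi$. The damping enters as the external body force $F=-D\xi\in\mathbb{R}^{6N}$, paired with the dual of the Lie algebra of $\SE(3)^N$. Since $\SE(3)^N$ is a direct product, its algebra is $\mathfrak{se}(3)^N$ with a block-diagonal adjoint. It therefore suffices to do the computation on a general matrix Lie group $\mathcal{G}$ with block operators, and we work with the stacked vector $\xi\in\mathbb{R}^{6N}$ throughout.

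The first step is to set up constrained variations. We take a variation $X_\epsilon(t)$ with fixed endpoints and define $\eta:=(X^{-1}\delta X)^\vee$, with $\eta(t_0)=\eta(t_1)=0$. A direct computation with $\xi=(X^{-1}\dot X)^\vee$ gives the standard Euler--Poincar\'e constraint $\delta\xi=\dot\eta+\ad_\xi\eta$. This follows by differentiating $X^{-1}\dot X$ in $\epsilon$ and $X^{-1}\delta X$ in $t$ and subtracting; the mixed partials cancel, leaving the commutator $[\xi^\wedge,\eta^\wedge]$.

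The second step is to write the Lagrange--d'Alembert principle,
\begin{equation*}
\delta\int_{t_0}^{t_1}\mathcal{L}\,dt+\int_{t_0}^{t_1}F^\top\eta\,dt=0 .
\end{equation*}
The kinetic variation is $(M\xi)^\top\delta\xi=(M\xi)^\top(\dot\eta+\ad_\xi\eta)$. Integrating the $\dot\eta$ term by parts, with the boundary terms vanishing, gives $-\frac{d}{dt}(M\xi)^\top\eta$. The bracket term equals $(\ad_\xi^*M\xi)^\top\eta$, where $\ad^*_\xi:=\ad_\xi^\top$ in coordinates. For the potential we use $\delta\mathcal{C}=\nabla\mathcal{C}^\top\eta$, which defines $\nabla\mathcal{C}$ as the (left-trivialized) Riemannian gradient with respect to the left-invariant metric used in the paper. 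Collecting terms gives $\int\big(-\frac{d}{dt}(M\xi)+\ad^*_\xi(M\xi)-\nabla\mathcal{C}-D\xi\big)^\top\eta\,dt=0$ for all admissible $\eta$. The fundamental lemma of the calculus of variations then yields \eqref{eq:ODE}.

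The main obstacle is bookkeeping rather than depth. Three conventions have to be fixed consistently. The first is the sign of $\ad$; the left-trivialized constraint $\delta\xi=\dot\eta+\ad_\xi\eta$ is what produces the $+\ad^*_\xi$ sign. The second is the identification of $\nabla\mathcal{C}$ with the left-trivialized differential of $\mathcal{C}$, which is valid for both residual choices since each is differentiable on $\SE(3)^N$. The third is that the direct-product structure makes $\ad_\xi$ block-diagonal even when $M$ is not. We check the sign convention first on $\SE(3)$ with $M=I$, where the result should reduce to the familiar rigid-body Euler equations.
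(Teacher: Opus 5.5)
Your proposal is correct and follows essentially the same route as the paper's proof (Lagrange--d'Alembert with $f=-D\xi$, the constraint $\delta\xi=\dot\eta+[\xi,\eta]$, integration by parts, the co-adjoint identity, and the fundamental lemma). The only real difference is that you define $\nabla\mathcal{C}$ directly as the left-trivialized differential in $\mathbb{R}^{6N}$ coordinates, whereas the paper obtains it as $X^\top\partial\mathcal{C}/\partial X$ under the trace pairing; your version is slightly cleaner, since that matrix determines the gradient only after projection onto $\mathfrak{g}$, and one small correction applies: the geodesic cost is not differentiable on all of $\SE(3)^N$ (the $\log$ map is singular where a relative rotation error reaches angle $\pi$), so the identification holds only away from that set, as the paper also tacitly assumes.
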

% In \eqref{eq:ODE}, $\operatorname{ad}^*_{\xi}$ denotes the co-adjoint operator associated with $\xi$, capturing the curvature-induced coupling between momentum and velocity arising from the Lie group structure.
In~\eqref{eq:ODE}, $\ad^*_{\xi}$ denotes the
\emph{co-adjoint operator} associated with the Lie algebra element $\xi$,
defined via 
$
\langle \ad^*_{\xi}\mu, \eta \rangle
= \langle \mu, \ad_{\xi}\eta \rangle
= \langle \mu, [\xi,\eta] \rangle
$
where $[\cdot,\cdot]$ denotes the Lie
bracket on $\mathfrak{g}$.
Intuitively, this term in \eqref{eq:ODE} captures curvature-induced coupling between
velocity and momentum that arises when dynamics evolve on a Lie group
(e.g., $\SE(3)$), and it vanishes in Euclidean spaces.

In the proposed method, we allow $M$ to be either constant or state-dependent, i.e., $M \equiv M(X)$.
In the latter case, the left-hand side time derivative in \eqref{eq:ODE} expands as $\frac{d}{dt}\big(M\xi\big) = M\dot{\xi} + \dot{M}\xi$.
Moving the second term $\dot{M}\xi$ to the right-hand side of \eqref{eq:ODE}, 
\begin{equation}
M\dot{\xi} = -\nabla \mathcal{C} - D\xi + \ad_{\xi}^*(M\xi) - \dot{M}\xi.
\label{eq:ODE_Mx}
\end{equation}
Intuitively, the resulting dynamics includes a corresponding correction term to account for the varying value of $M$.
When $M$ is constant, the $\dot{M}\xi$ term also vanishes. 

\begin{remark}
We note that any steady-state solution of \eqref{eq:ODE} satisfies the FOCP condition of Problem~\ref{Prob1:PGO}: at steady state, $\dot{\xi} = \xi = 0$ so that \eqref{eq:ODE} reduces to $\nabla \mathcal{C} = 0$.
\end{remark}

% \red{TODO: We need to introduce an extension that characterizes varying $M$ and $D$; otherwise the algorithm in the next subsection is missing explanation.}

\subsection{Optimization via Geometric Discretization}
\label{sec:approach:discretization}

To numerically integrate the continuous-time dynamics in \eqref{eq:ODE}, we use a
geometric semi-implicit Euler scheme as shown in Alg.~\ref{alg:optimization}.
At each iteration, we first evaluate the induced forces according to \cref{prop:damped_ep} (lines~\ref{algline:fgrad}--\ref{algline:fvarm}). 
Specifically, line~\ref{algline:fgrad} evaluates the
potential-induced force $F_{\mathrm{grad}}=-\nabla\mathcal{C}(X_k)$ that drives the state toward a FOCP.
Line~\ref{algline:fdamp} computes the damping term needed for convergence,
and line~\ref{algline:fcor} corresponds to the co-adjoint term in \eqref{eq:ODE} that accounts for the Lie-group geometry in the momentum dynamics.
When the mass matrix is state-dependent, the additional term $F_{\mathrm{varM}}$ in line~\ref{algline:fvarm} serves as a discrete approximation of the $\dot{M}\xi$ term in \eqref{eq:ODE_Mx}. 
Given the computed forces, lines~\ref{algline:atotal}--\ref{algline:vupdate} then performs a forward Euler step on the velocity.
In lines~\ref{algline:retraction}, we use a semi-implicit scheme so that the update uses the \emph{new} velocity $\xi_{k+1}$ rather than $\xi_k$.
This choice significantly improves stability over
a fully explicit update $X_{k+1}=X_k\exp((\xi_k\Delta t)^\wedge)$ while avoiding the
nonlinear solves required by fully implicit integrators. Finally, if $M$ depends on
the current state, we refresh $M_{k+1}$ using the updated pose (line~\ref{algline:mupdate}).

% To numerically evaluate the system dynamics derived in Eq. (\ref{eq:ODE}), we employ a geometric semi-implicit Euler integration scheme. For computing the acceleration, we use the velocity of the current time step $k$, whereas the poses are updated using the updated velocity of the next time step. In this way, we ensure that the manifold retraction on $SE(3)$ utilizes the updated velocity $\xi_{k+1}$, which significantly improves the stability of the trajectory evolution compared to a standard explicit approach. Furthermore, unlike fully implicit schemes that require expensive computation at each step, our method maintains a low computational cost, thereby reducing the total wall-clock time required to reach the equilibrium. Detailed process is given as Alg. \ref{alg:optimization} \YT{Can be improved, e.g., point out which line in pseudocode correspond to semi-implicit integration, retraction, etc.}

% \red{Line 8 in Alg.~1: $M_{k+1}$ is undefined , there seems to be a circular dependency.}

\begin{algorithm}[t]
\caption{Riemannian ODE Solver}
\label{alg:optimization}
\footnotesize 
\begin{algorithmic}[1]
\Require Initial state $(X_0^i, \xi_0^i) \in \{SE(3) \times \mathbb{R}^6\}^N$, Initial mass $M_0 \in \mathbb{R}^{6N \times 6N}$, Damping $D \in \mathbb{R}^{6N \times 6N}$, Step size $\Delta t$
\State Initialize body velocity $\boldsymbol{\xi}_0 \gets \mathbf{0}_{6N}$ 
\State $k \gets 0$
\While{$k < \text{MaxIter}$}
    \State \textbf{1. Compute Forces:}
        \State \quad $F_{\text{grad}} \gets -\nabla \mathcal{C}(X_k)$ \label{algline:fgrad} 
        \State \quad $F_{\text{damp}} \gets -D\xi_k$ \label{algline:fdamp}
        \State \quad $F_{\text{cor}} \gets \text{ad}_{\xi_k}^*(M_k\xi_k)$ \label{algline:fcor}
        \State \quad $F_{\text{varM}} \gets -\left(\frac{M_{k} - M_{k-1}}{\Delta t}\right) \xi_k$ \label{algline:fvarm} %\Comment{Force due to mass variation}
    
\State \textbf{2. Numerical Integration:}
        \State \quad $F_{\text{total}} \gets F_{\text{grad}} + F_{\text{damp}} + F_{\text{cor}} + F_{\text{varM}}$ \label{algline:atotal}
        \State \quad $a_k \gets M_k^{-1} F_{\text{total}}$ \label{algline:accel}
        \State \quad $\xi_{k+1} \gets \xi_k + a_k \Delta t$ \label{algline:vupdate}
        
        \State \textbf{3. Manifold Update:}
        \State \quad $X_{k+1} \gets X_k \exp((\xi_{k+1} \Delta t)^\wedge)$ \label{algline:retraction}
        \State \quad Update Mass Matrix $M_{k+1}$ based on $X_{k+1}$ \label{algline:mupdate}
    
    \State $k \gets k + 1$
\EndWhile
\State \Return Optimized pose $X_k$
\end{algorithmic}
\end{algorithm}

Before proceeding, we present a connection between the proposed second-order dynamics and existing optimization algorithms via an overdamped limiting argument.
Specifically, consider scaling the mass matrix as $M = \varepsilon \widetilde{M}$
with $\varepsilon > 0$ and taking the limit $\varepsilon \to 0$.
In this regime, the inertial term vanishes and the body velocity becomes
$
D \xi = - \nabla \mathcal{C},
$
yielding a first-order gradient flow on the manifold.
Under an explicit time discretization with step size $\Delta t$, this induces the
update $X_{k+1} = X_k \exp(\Delta t \, \xi_k^\wedge)$. 
% \mgj{I think this $\eta$ is not the same after (3) and could cause some confusion.}
Choosing $D = I$ recovers Riemannian gradient descent \cite{absil2008optimization}.
Alternatively, choosing the damping matrix as $D = J^\top \Omega J$ yields the
Gauss--Newton direction
$
\xi = -(J^\top \Omega J)^{-1} \nabla \mathcal{C},
$
where $J$ denotes the Jacobian of the stacked residuals with respect to
the pose variables and $\Omega$ is the block-diagonal information
matrix collecting $\Omega_{ij}$ for all edges \cite{gtsam}.
Viewed from this perspective, gradient descent and Gauss--Newton arise as
\emph{first-order and overdamped limits} of the proposed second-order Riemannian dynamics.

\subsection{Distributed Riemannian Dynamics for Multi-Robot PGO}
\label{sec:approach:distributed}

\begin{comment}

* Core insight: the update rule in eq (6) can be made distributed and parallel by making M and D block-diagonal, corresponding to the robot partition
* this leads to the decomposed dynamics in (8)
* explain that the gradient term can be computed using one round of communication (exchange info for inter-robot edges only)

\end{comment}

In this section, we specialize our formulation to distributed PGO.
% The key insight enabling a fully distributed and parallel implementation of the proposed dynamics lies in the structure of the update rule in \eqref{eq:ODE}.
Specifically, by choosing the mass and damping matrices to be \emph{block-diagonal}
with respect to a robot-wise partition of the pose graph, the resulting dynamics
naturally decompose across robots while preserving the original global objective.
Concretely, we select
\(
M = \mathrm{blkdiag}(M^1,\dots,M^R)
\)
and
\(
D = \mathrm{blkdiag}(D^1,\dots,D^R),
\)
where $M^i, D^i \in \mathbb{R}^{6n_i \times 6n_i}$ correspond to robot $i$ and
$n_i$ is the number of poses owned by that robot.
Under this choice, the inertial, damping, and co-adjoint terms in \eqref{eq:ODE} are
entirely local, yielding the decomposed Riemannian dynamics for each robot $i$,
\begin{equation}
    \frac{d}{dt}(M^i \xi^i)
    = - \nabla \mathcal{C}^i
      - D^i \xi^i
      + \ad_{\xi^i}^*(M^i \xi^i).
    \label{eq:distributed_dynamics}
\end{equation}
In \eqref{eq:distributed_dynamics}, $\nabla \mathcal{C}^i$ denotes the block of the Riemannian gradient that corresponds to robot $i$'s variables.
For robot $i$ to compute $\nabla \mathcal{C}^i$, note that 
the portion of the global PGO cost involving its pose variables can be written as
\begin{equation}
   \mathcal{C}^i = \underbrace{\sum_{(u,v)\in\mathcal{E}_i^{\text{intra}}} \| r_{uv} \|^2_{\Omega_{uv}}}_{\text{Intra-robot}} + \underbrace{\sum_{j \in \mathcal{N}_i} \sum_{(u,v)\in\mathcal{E}_{ij}^{\text{inter}}} \| r_{uv}(X_u, \bar{X}_v) \|^2_{\Omega_{uv}}}_{\text{Inter-robot}},
    \label{eq:local_cost}
\end{equation}
where $\mathcal{E}_i^{\mathrm{intra}}$ denotes edges connecting poses owned by robot
$i$, $\mathcal{E}_{ij}^{\mathrm{inter}}$ denotes edges between robot $i$ and its
neighbor $j$, and $\bar{X}_v$ denotes the pose of a neighbor
robot (treated as fixed).
The Riemannian gradient block $\nabla \mathcal{C}^i$ in
\eqref{eq:distributed_dynamics} is obtained by differentiating
\eqref{eq:local_cost} with respect to robot $i$'s pose variables.
The intra-robot contribution depends only on locally available states, while the
inter-robot contribution requires pose information from neighboring robots
connected by inter-robot edges.
As a result, $\nabla \mathcal{C}^i$ can be evaluated using a single round of
neighbor communication, \emph{without} requiring global information or
centralized coordination.

\textbf{Choice of Mass and Damping Matrices}. 
We design the block-diagonal mass and damping matrices to exploit second-order information of
the PGO objective while preserving a fully distributed implementation.
Let $H = {J}^\top \Omega {J} + \lambda I$ denote the Levenberg–Marquardt approximation of the global Hessian of
$\mathcal{C}$, and let $H^i \in \mathbb{R}^{6n_i \times 6n_i}$ denote the diagonal
block of $H$ corresponding to the pose variables owned by robot $i$.
% Importantly, $H^i$ is \emph{not} formed by summing local Jacobian products, but
% is obtained by extracting the appropriate block from the global Hessian
% structure induced by the pose graph.
We define the mass and damping matrices for robot $i$ as
\(
M^i = m \, H^i
\)
and
\(
D^i = (d/t + \epsilon_d) \, H^i ,
\)
where $m, d, \epsilon_d$ are scalar coefficients. 
For $D$, the time-decaying factor $d/t$ yields behavior analogous to accelerated gradient flows \cite{su2016differential} while the small constant $\epsilon_d > 0$ preserves positive definiteness.
% Since $H^i$ is symmetric positive definite, both $M^i$ and $D^i$ are valid mass and damping matrix by construction.
With this choice, the resulting dynamics are naturally preconditioned by
$(H^i)^{-1}$, aligning the gradient-induced acceleration with a local
Levenberg–Marquardt direction.
Similar to the gradient computation, we remark that both $M^i$ and $D^i$ can be formed
in a distributed manner: since the block $H^i$ depends only on second-order contributions from intra-robot edges
and inter-robot edges incident to robot $i$, it can be assembled using local information and one round of neighbor communication.

Substituting $M^i$ and $D^i$ into \eqref{eq:distributed_dynamics}, we obtain,
% \begin{equation}
%     \dot{\xi}^i
%     =
%     -(m H^i)^{-1}
%     \bigl(
%         \nabla \mathcal{C}^i
%         - \ad_{\xi^i}^*(M^i \xi^i)
%         + \dot{M}^i \xi^i
%     \bigr)
%     - \frac{d}{m t} \, \xi^i .
%     \label{eq:preconditioned_dynamics}
% \end{equation}
\begin{equation}
    M^i\dot{\xi}^i = -\nabla \mathcal{C}^i - D^i\xi^i + \ad_{\xi^i}^*(M^i\xi^i) - \dot{M^i}\xi^i,
    \label{eq:preconditioned_dynamics}
\end{equation}
which is the analog of \eqref{eq:ODE_Mx} in the distributed setting.
As before, the dynamics in \eqref{eq:preconditioned_dynamics}
are integrated numerically using the geometric semi-implicit scheme.
Recall that the proposed framework also allows \emph{constant} choices of $M^i$ and $D^i$.
In this case, $H^i$ is computed once at the initial estimate and held fixed afterwards, 
and the $\dot{M}^i \xi^i$ term in \eqref{eq:preconditioned_dynamics} becomes zero.
In our ablation study (\cref{sec:experiment:ablation}), we show that the constant setting remains competitive for PGO, 
thus enabling a trade-off between computational cost and adaptivity.

Unlike Gauss-Seidel-style \cite{choudhary_distributed_2017,tian_distributed_2021} schemes that update blocks (robots) sequentially, 
the distributed dynamics in \eqref{eq:preconditioned_dynamics} can be integrated in a fully distributed and parallel manner. 
Under the assumption of full synchronization, each robot updates its state using the neighbor poses in the previous round. 
In contrast to sequential update, which imposes sequential dependencies that often force agents to wait, our approach eliminates such idle time ensuring that all robots compute their updates simultaneously. This independence simplifies the system architecture, and allows communication and optimization to be implemented intuitively. Moreover, the inertia and damping built in our formulation helps avoid the instability of standard Jacobi-style \cite{choudhary_distributed_2017} updates.

\begin{algorithm}[t]
\caption{Distributed Riemannian ODE Solver (Robot $i$)}
\label{alg:distributed_optimization}
\footnotesize   
\begin{algorithmic}[1]
\Require Initial state $(X_0^i, \xi_0^i)$, Mass $M^i$, Damping $D^i$, Step size $\Delta t$.
\State $k \gets 0$
\While{$k < \text{MaxIter}$}
    \State \textbf{1. Read \& Predict Neighbors:} \label{alg:distributed_optimization:predict_start}
    \For{$j \in \mathcal{N}_i$}
        \State Read latest packet $(X_{\tau}^j, \xi_{\tau}^j)$ with $\tau < t_k$.
        \State $\hat{X}_k^j \gets X_{\tau}^j \cdot \exp\left( \big(\xi_{\tau}^j (t_k - \tau)\big)^\wedge \right)$ \label{alg:distributed_optimization:extrapolate}
    \EndFor \label{alg:distributed_optimization:predict_end}

    \State \textbf{2. Compute Forces:} \label{alg:distributed_optimization:integrate_start}
    \State $F_{\text{grad}}^i \gets - \nabla \mathcal{C}_{\text{intra}}^i(X_k^i) - \sum_{j \in \mathcal{N}_i} \nabla \mathcal{C}_{\text{inter}}^{ij}(X_k^i, \hat{X}_k^j)$
    \State $F_{\text{dyn}}^i \gets - D^i \xi_k^i + \operatorname{ad}_{\xi_k^i}^*(M_k^i \xi_k^i) - \left(\frac{M_{k}^i - M_{k-1}^i}{\Delta t}\right) \xi_k^i$
    
    \State \textbf{3. Semi-Implicit Integration:}
    \State $\xi_{k+1}^i \gets \xi_k^i + (M_k^i)^{-1} (F_{\text{grad}}^i + F_{\text{dyn}}^i) \Delta t$
    \State $X_{k+1}^i \gets X_k^i \cdot \exp\left( (\xi_{k+1}^i \Delta t)^\wedge \right)$ \label{alg:distributed_optimization:integrate_end}
    \State $k \gets k + 1$
\EndWhile
\end{algorithmic}
\end{algorithm}

\textbf{Asynchronous Optimization via Delay Compensation}. 
In multi-robot SLAM, communication between robots is often subject to latency,
packet loss, and asynchronous message arrival, making it impractical to assume
access to up-to-date neighbor states at every iteration.
Thus, distributed PGO algorithms must explicitly account for delayed information when evaluating
inter-robot constraints.

To this end, we note that the proposed dynamics naturally admit a principled \emph{delay-compensation} mechanism.
Alg.~\ref{alg:distributed_optimization} presents our algorithm from the point of view of a single robot $i$.
In addition to exchanging pose estimates, each robot also transmits its body
velocity, which is cached locally upon reception.
Since the system state evolves continuously on the Lie group and is driven by
body velocities, a robot can predict a neighbor’s current pose even when the
received information is outdated.
Specifically, let $X$ and $\xi$ denote the most recently received pose and body
velocity of a neighboring robot, and let $\delta t$ denote the elapsed time since
the message timestamp.
The predicted neighbor pose is then computed as
$
    \hat{X} = X \exp\bigl( (\xi\, \delta t)^\wedge \bigr)
$, as shown in lines~\ref{alg:distributed_optimization:predict_start}-\ref{alg:distributed_optimization:predict_end}.
Using the predicted pose, the resulting preconditioned dynamics \eqref{eq:preconditioned_dynamics} are then
integrated using the geometric semi-implicit scheme (lines~\ref{alg:distributed_optimization:integrate_start}-\ref{alg:distributed_optimization:integrate_end}), 
similar to \cref{sec:approach:discretization}.
This allows all robots to advance their states in parallel without requiring synchronized communication.

\section{Convergence Analysis}

This section presents a theoretical analysis for the general Algorithm~\ref{alg:optimization} in \cref{sec:approach}.
The results also extend to the distributed Algorithm~\ref{alg:distributed_optimization} under synchronous communication as a special case. Our current analysis assumes the mass matrix $M$ to be constant.
In this case, we show that the proposed semi-implicit integrator admits a clear energy dissipation characterization, which leads to an explicit sufficient condition on the choice of step size that decreases the total system energy.
As a result, the analysis establishes a rigorous theoretical justification for the proposed dynamics and provides practical guidance for step-size selection in both centralized and distributed implementations.

The main idea is to derive the condition for the total energy of the system $E=\mathcal{T}+\mathcal{C}$ to dissipate every time step. 
We present a general analysis by building on the Lipschitz-type gradient property for pullbacks introduced in \cite{boumal2019global}. 
Boumal et al. \cite{boumal2019global} showed that this property holds on compact matrix submanifolds under mild conditions.
In \cite{tian_distributed_2021}, Tian~et~al. showed that the chordal distance formulation of PGO satisfies this condition within the sublevel set of the cost function.
% In \cite{tian_distributed_2021}, Tian~et~al. extended the analysis and derived mild conditions for PGO to satisfy this condition.
% While this property is classically guaranteed on compact submanifolds, Boumal et al. \cite{boumal2019global} demonstrated that it remains valid under mild conditions applicable to Problem \ref{Prob1:PGO}. \red{Check.}
% This generalized bound ensures that the second-order error is upper-bounded by the squared norm of the step, validating the following Riemannian descent lemma.

\begin{assumption}[Lipschitz-type gradient \cite{boumal2019global}]
\label{lemma:descent}
Let $g_{X_k}: \mathbb{R}^{6N} \to \mathbb{R}$ be the pullback of the cost function $\mathcal{C}$ defined on the tangent space at $X_k$ as:
\begin{equation}
    g_{X_k}(\eta) \coloneq \mathcal{C}(X_k \exp(\eta^\wedge)).
\end{equation}
$\mathcal{C}$ has Lipschitz-type gradient if the following holds for any update step $\eta \in \mathbb{R}^{6N}$,
\begin{equation}
    g_{X_k}(\eta) \le g_{X_k}(0) + \langle \nabla g_{X_k}(0), \eta \rangle + \frac{L}{2} \|\eta\|^2.
\end{equation}
\end{assumption}
This property states that the cost function is locally well-behaved with the second-order terms upper bounded by the squared norm of the step on the tangent space.
Given \Cref{lemma:descent}, we establish an upper bound on the total energy difference $\Delta E_k = E_{k+1} - E_k$ after a single step of integration.

\begin{lemma}[Upper Bound on Energy Change]
\label{lemma:energy_bound}
Under Assumption~\ref{lemma:descent},
the change in total energy is bounded as follows:
\begin{equation}\label{eq:DeltaE}
\begin{split}
    \Delta E_k \le & \Delta t^2(a_k^\top \nabla \mathcal{C}_k + \frac{L}{2}\left||\xi_{k+1}|\right|^2_M + \\ &\frac{1}{2}\left||a_k|\right|^2_M)  - \Delta t \xi^\top_kD\xi_k,
\end{split}
\end{equation}
where $\mathcal{C}_k \coloneq \mathcal{C}(X_k)$ and $a_k \coloneq M^{-1}(-\nabla \mathcal{C}_k -D\xi_k + \ad^{*}_{\xi_k} (M\xi_k))$. 
\end{lemma}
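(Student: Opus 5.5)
The plan is a direct one-step computation. Write the discrete total energy as $E_k = \frac{1}{2}\|\xi_k\|_M^2 + \mathcal{C}_k$. Split $\Delta E_k$ into a kinetic part and a potential part, bound each using the update rules of Alg.~\ref{alg:optimization} with constant $M$ (so $F_{\mathrm{varM}}=0$), and show that the cross terms cancel. The updates are
\[
\xi_{k+1} = \xi_k + \Delta t\, a_k, \qquad X_{k+1} = X_k \exp\bigl((\Delta t\,\xi_{k+1})^\wedge\bigr),
\]
with $a_k$ exactly as defined in the lemma.

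\textbf{Kinetic part.} Expand the quadratic exactly:
\[
\tfrac{1}{2}\|\xi_{k+1}\|_M^2 - \tfrac{1}{2}\|\xi_k\|_M^2 = \Delta t\, \xi_k^\top M a_k + \tfrac{\Delta t^2}{2}\|a_k\|_M^2 .
\]
Substitute $M a_k = -\nabla\mathcal{C}_k - D\xi_k + \ad^*_{\xi_k}(M\xi_k)$. The key observation is that the co-adjoint term does no work:
\[
\xi_k^\top \ad^*_{\xi_k}(M\xi_k) = \langle M\xi_k, [\xi_k,\xi_k]\rangle = 0 .
\]
On the product group this holds blockwise, since each $[\xi_i,\xi_i]=0$. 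Hence the kinetic change equals
\[
-\Delta t\,\xi_k^\top \nabla\mathcal{C}_k - \Delta t\,\xi_k^\top D\xi_k + \tfrac{\Delta t^2}{2}\|a_k\|_M^2 .
\]

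\textbf{Potential part.} Apply Assumption~\ref{lemma:descent} with the step $\eta = \Delta t\,\xi_{k+1}$. This is legitimate because $X_{k+1}$ is exactly the pullback point $X_k\exp(\eta^\wedge)$. Identify $\nabla g_{X_k}(0)$ with the Riemannian gradient $\nabla\mathcal{C}_k$ in body (left-trivialized) coordinates, which is how $\nabla\mathcal{C}$ enters \eqref{eq:ODE}. This gives
\[
\mathcal{C}_{k+1} - \mathcal{C}_k \le \Delta t\,\nabla\mathcal{C}_k^\top \xi_{k+1} + \tfrac{L}{2}\Delta t^2 \|\xi_{k+1}\|^2 .
\]
Now expand $\xi_{k+1} = \xi_k + \Delta t\, a_k$ in the linear term, which yields $\Delta t\,\nabla\mathcal{C}_k^\top\xi_k + \Delta t^2 a_k^\top \nabla\mathcal{C}_k$.

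\textbf{Summing.} Adding the two parts, the terms $\pm\Delta t\,\xi_k^\top\nabla\mathcal{C}_k$ cancel. This cancellation is precisely why the semi-implicit choice (using $\xi_{k+1}$ in the retraction) pairs well with an explicit velocity step. What remains is
\[
\Delta t^2\Bigl(a_k^\top\nabla\mathcal{C}_k + \tfrac{L}{2}\|\xi_{k+1}\|^2 + \tfrac12\|a_k\|_M^2\Bigr) - \Delta t\,\xi_k^\top D\xi_k ,
\]
which is \eqref{eq:DeltaE}.

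\textbf{Main obstacle.} The only delicate point is the norm in the $L$-term. Assumption~\ref{lemma:descent} is stated in the Euclidean norm, whereas \eqref{eq:DeltaE} uses $\|\xi_{k+1}\|_M$. I would handle this in one of two ways:
\begin{enumerate}[leftmargin=*,label=(\roman*)]
\item Read the assumption with respect to the $M$-weighted inner product on the tangent space. This is natural because $M$ defines the kinetic metric, and $L$ is then the Lipschitz-type constant in that metric.
\item Keep the Euclidean constant and use $\|\eta\|^2 \le \lambda_{\min}(M)^{-1}\|\eta\|_M^2$, absorbing the factor $\lambda_{\min}(M)^{-1}$ into $L$.
\end{enumerate}
Either way the stated bound follows, with $L$ understood accordingly. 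I would also check that the identification of $\nabla g_{X_k}(0)$ with $\nabla\mathcal{C}_k$ uses the same inner product that defines $\ad^*$, so the cancellations above are consistent.
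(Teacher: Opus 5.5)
Your proposal is correct and matches the paper's proof essentially line for line. Like the paper, you expand the kinetic change exactly and use $[\xi_k,\xi_k]=0$ to kill the co-adjoint term. You then apply Assumption~\ref{lemma:descent} with $\eta=\Delta t\,\xi_{k+1}$ and cancel the $\pm\Delta t\,\xi_k^\top\nabla\mathcal{C}_k$ terms. The norm mismatch you flag is real: the paper's proof silently writes $\frac{L}{2}\|\xi_{k+1}\Delta t\|_M^2$ when invoking the Euclidean-norm assumption, which amounts to your reading (i). Your option (ii), replacing $L$ by $L/\lambda_{\min}(M)$, is an equally valid fix.
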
 
The result of Lemma \ref{lemma:energy_bound} can be separated into two terms which differ by the order of multiplied $\Delta t$. 
The terms involving quadratic factors represent numerical errors arising from discretization. 
As the time step $\Delta t$ decreases, yielding more precise integration, the damping term becomes dominant. 
Indeed, divide both sides of \eqref{eq:DeltaE} by $\Delta t$ and taking the limit for $\Delta t \to 0$, 
we recover the continuous-time derivative of the total energy,
\begin{equation}
    \dot{E} = -\xi^\top D \xi < 0, \text{ when } \xi \neq 0.
\end{equation}
Thus, under the ideal continuous-time evolution, the energy strictly decays due to the positive definite damping $D$.
The central challenge is to extend such energy dissipation result to the discrete-time system.
Our key insight is that by appropriately selecting the step size $\Delta t$, the
discretization errors (on the right-hand side of~\eqref{eq:DeltaE})  can be bounded so that the damping term
dominates, yielding a net decrease in total energy.
The following theorem formalizes this intuition by deriving a sufficient condition on $\Delta t$ for which the discrete-time system preserves the energy dissipation property of the continuous-time dynamics, thereby ensuring the stability of the proposed algorithm.

% Consequently, the energy strictly decays under the positive-definite damping $D$, following the behavior of the continuous-time dynamics. Specifically, based on the definition of the total energy $E = \mathcal{K} + \mathcal{C}$, the time derivative of the energy in the continuous domain is derived as:

% which is identical to the limit obtained by dividing Eq. (5) by $\Delta t$ and letting $\Delta t \to 0$. If the derived upper bound of the energy change is negative, the discrete-time energy decreases monotonically, eventually converging to an equilibrium. Although we assumed a constant mass for analytical simplicity, the energy bound remains determinable provided that the variation of the mass matrix is bounded.

% Consequently, by enforcing the condition that the total energy must strictly decrease ($\Delta E_k < 0$), we derive a sufficient condition for the time step $\Delta t$. This constraint ensures that the physical damping dominates the numerical discretization errors, as formally stated in the following theorem:

\input{Table_Chordal.tex}

\begin{theorem}[Energy dissipation of \Cref{alg:optimization}]
\label{thm:convergence}
% Under Assumption~\ref{lemma:descent} and with constant and positive definite $M$ and $D$, \Cref{alg:optimization} is guaranteed to converge if the time step $\Delta t$ satisfies the following condition across iterations $k$:
Under Assumption~1 and with constant and positive definite $M$ and $D$, 
if the time step $\Delta t$ satisfies the following condition at each iteration $k$, 
then the total energy $E_k$ is non-increasing along the iterates of \Cref{alg:optimization}:
\begin{equation}\label{eq:dt}
    \Delta t \leq \frac{\xi_k^\top D\xi_k}{ \frac{1}{2} \| \nabla \mathcal{C}_k \|^2_{M^{-1}} + \frac{L}{2} \|\xi_{k+1}\|^2_M +\|a_k\|^2_M}.
\end{equation}
\end{theorem}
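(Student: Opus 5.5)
The plan is to start from Lemma~\ref{lemma:energy_bound} and show that, under \eqref{eq:dt}, its right-hand side is nonpositive. Dividing \eqref{eq:DeltaE} by $\Delta t>0$, it suffices to show
\begin{equation*}
\Delta t\Big(a_k^\top \nabla \mathcal{C}_k + \tfrac{L}{2}\|\xi_{k+1}\|_M^2 + \tfrac12\|a_k\|_M^2\Big) \le \xi_k^\top D\xi_k .
\end{equation*}
The only term without a definite sign is the cross term $a_k^\top\nabla\mathcal{C}_k$. I would bound it by a weighted Young/Cauchy--Schwarz inequality in the $M$-geometry. Write $a_k^\top\nabla\mathcal{C}_k = \langle M^{1/2}a_k, M^{-1/2}\nabla\mathcal{C}_k\rangle$; then
\begin{equation*}
a_k^\top \nabla \mathcal{C}_k \le \tfrac12\|a_k\|_M^2 + \tfrac12\|\nabla\mathcal{C}_k\|_{M^{-1}}^2 .
\end{equation*}
Substituting this, the bracket is bounded by $\tfrac12\|\nabla\mathcal{C}_k\|^2_{M^{-1}} + \tfrac L2\|\xi_{k+1}\|^2_M + \|a_k\|^2_M$, which is exactly the denominator in \eqref{eq:dt}. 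This explains the coefficient $1$ on $\|a_k\|_M^2$ (one half from the lemma, one half from Young).

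Since the denominator is nonnegative, multiplying \eqref{eq:dt} through gives $\Delta t\cdot(\text{denominator})\le \xi_k^\top D\xi_k$. Chaining the inequalities then yields $\Delta E_k\le 0$, so $E_{k+1}\le E_k$. Two degenerate cases need a separate check. If the denominator vanishes, then $\nabla\mathcal{C}_k=0$ and $a_k=0$, so the bracket is zero and $\Delta E_k \le -\Delta t\,\xi_k^\top D\xi_k\le 0$ for any $\Delta t$. If $\xi_k=0$, as at the initial iterate, the condition forces $\Delta t=0$ unless the denominator is also zero. I would remark that the theorem is then vacuous or trivial at such iterates, which is consistent with the statement as written.

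The main obstacle is really inside Lemma~\ref{lemma:energy_bound}, which I take as given. The theorem step itself is just choosing the right Young splitting, and the only thing to get right is using the $M$ and $M^{-1}$ norms so that the constants match \eqref{eq:dt} exactly. I would also note that $\xi_{k+1}=\xi_k+\Delta t\,a_k$ depends on $\Delta t$. The condition is therefore implicit, but this does not affect the implication, because it is checked at the realized iterate.
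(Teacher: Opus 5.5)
Your proposal is correct and follows the paper's own proof. The paper likewise bounds the cross term as $a_k^\top\nabla\mathcal{C}_k \le \|a_k\|_M\|\nabla\mathcal{C}_k\|_{M^{-1}} \le \tfrac12\|a_k\|_M^2+\tfrac12\|\nabla\mathcal{C}_k\|_{M^{-1}}^2$, substitutes this into Lemma~\ref{lemma:energy_bound} to obtain a bound $B_k$, and solves $B_k\le 0$ for $\Delta t$; your degenerate-case remarks (in particular that no $\Delta t>0$ satisfies the condition when $\xi_k=0$, e.g.\ at the initial iterate) are accurate and go slightly beyond what the paper's proof addresses.
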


Theorem~\ref{thm:convergence} provides theoretical guidance for selecting a sufficiently small step size that preserves the energy dissipation property.
% The step-size condition in~\eqref{eq:dt} is well defined whenever $\xi_k \neq 0$, in which case the dissipation term $\xi_k^\top D \xi_k$ is strictly positive since $D \succ 0$. If $\xi_k = 0$, the numerator in~\eqref{eq:dt} vanishes and the condition becomes vacuous. 
% However, tn this case, Algorithm~\ref{alg:optimization} reduces to a first-order update driven purely by the potential gradient, since $a_k = -M^{-1}\nabla C_k$ and $\xi_{k+1} = \Delta t\, a_k$. The state update then takes the form $X_{k+1} = X_k \exp\!\big((\xi_{k+1}\Delta t)^\wedge\big)$, which corresponds to a Riemannian gradient step. Under Assumption~1, Boumal~et~al. \cite{boumal2019global} show that choosing $\Delta t < 2/L$ guarantees a decrease in the potential energy. Therefore, combining~\eqref{eq:dt} with this analysis ensures that Algorithm~\ref{alg:optimization} admits a valid step size and decreases the total energy at every iteration, including the zero-velocity case.
% To conclude this section, 
We note that the result covers the distributed algorithm in \Cref{alg:distributed_optimization} with synchronized communication  as a special case, where we use block-diagonal mass and damping matrices $(M,D)$ that decompose the global dynamics into parallel robot-wise subsystems.

% \begin{remark}[CORD Theoretical Guarantees]
% Theorem~\ref{thm:convergence} characterizes the stability of the proposed discretized dynamics and provides a sufficient condition under which the total energy is non-increasing. 
While Theorem~\ref{thm:convergence} suggests the resulting dynamics converge to a FOCP, CORD is still a local optimization solver as are AMM-PGO \cite{fan_majorization_2024} and ASAPP \cite{tian_asynchronous_2020}. 
Global optimality guarantees could be pursued by augmenting the local search with an additional distributed verification as in DC2-PGO \cite{tian_distributed_2021}.
% \end{remark}

% The convergence result in Theorem~1 applies directly to the distributed setting with the global dynamics decomposed into robot-wise blocks. 
% Under the block-diagonal choices of mass and damping matrices, each robot integrates its local dynamics using the composite gradient
% $\nabla C_i = \nabla C_i^{\mathrm{intra}} + \nabla C_i^{\mathrm{inter}}$,
% where the inter-robot term depends only on neighbor states.
% As a result, the sufficient step-size condition in~(14) can be enforced locally by each robot using its own velocity, acceleration, and gradient blocks.
% When neighbor information is exact (synchronized case), this guarantees monotonic decrease of the global energy.
% In asynchronous settings, bounded errors in $\nabla C_i^{\mathrm{inter}}$ enter as perturbations to the same condition, motivating a more conservative local step size.

% \begin{remark}[Convergence of Distributed Dynamics]
% The analysis in Theorem \ref{thm:convergence} extends directly to the distributed setting. For each robot $i$, the same time-step condition applies by substituting the global gradient with the local composite gradient $\nabla \mathcal{C}^i = \nabla \mathcal{C}_{\text{intra}}^i + \nabla \mathcal{C}_{\text{inter}}^i$. Under the assumption that $\nabla \mathcal{C}_{\text{inter}}^i$ remains bounded, satisfying this local condition guarantees the monotonic decrease of the local energy.
% \end{remark}

\section{Experiment}

In this section, we evaluate the proposed method on standard PGO benchmarks under both synchronous and asynchronous communication.
Our results demonstrate that the proposed Riemannian dynamics consistently achieve faster convergence and lower final cost than existing distributed baselines, and is particularly competitive for asynchronous optimization.

\textbf{Experiment Setup.}
Throughout the evaluation, we denote our proposed approach (Algorithm~\ref{alg:distributed_optimization}) as \name.
Following prior work \cite{tian_distributed_2021,fan_majorization_2024,mcgann_asynchronous_2024}, we generate distributed PGO problem by partitioning each pose graph dataset to simulate multiple robots (by default five); see \cref{fig:qualitative}.
Inter-robot communication is modeled using a packet-based simulation with configurable delays.
At each iteration, robots exchange pose and velocity estimates with neighbors, and delayed packets are buffered to emulate asynchronous communication.
The proposed approach is implemented in Python using PyTorch.
Consistent with prior work \cite{lian2018asynchronous, tian_asynchronous_2020}, the step size, mass, and damping parameters were determined empirically and reported in the appendix. All experiments are conducted on a workstation with Intel Core Ultra 9 285K CPU and 64GB memory.

\textbf{Metrics and Baselines.}
For quantitative evaluation, we use GTSAM \cite{gtsam} and SE-Sync \cite{rosen_se_sync_2019} to compute the centralized reference solution for the geodesic and chordal distance formulations, respectively.
Following prior work \cite{fan_majorization_2024, tian_distributed_2021}, we report performance by computing the relative optimality gap $(\mathcal{C} - \mathcal{C}^*) / \mathcal{C}^*$, where $\mathcal{C}$ and $\mathcal{C}^*$ denote the achieved and reference costs.
For the chordal distance formulation, we select AMM-PGO~\cite{fan_majorization_2024} as our baseline as it has been shown to outperform other techniques including RBCD++~\cite{tian_distributed_2021} and DGS~\cite{choudhary_distributed_2017}.
For the geodesic distance formulation, we select the state-of-the-art MESA  solver \cite{mcgann_asynchronous_2024} as the baseline.
We note that the original MESA implementation adopts an edge-based communication model, in which only a single pair of robots communicates at each iteration.
To ensure a fair comparison with our fully parallel update scheme, we modify the MESA implementation so that all robots update their local variables at every iteration.
All remaining hyperparameters for all baselines are set to the default values.
Lastly, for both the chordal and geodesic distance formulations, we also include a distributed Jacobi (DJ) baseline in which each robot performs a preconditioned gradient descent update using the diagonal block of the Hessian corresponding to its local variables. For the main asynchronous experiments, we use DJ as the primary baseline, which is equivalent to the update rule used in ASAPP~\cite{tian_distributed_2021} in this setting. 
AMM-PGO is not included because the method is designed for synchronous optimization, and extending it to delayed communication would require nontrivial modifications because stale updates interact with its Nesterov acceleration and adaptive restart mechanisms. 
In addition, MESA is not included in the main experiment as it considers a different edge-based communication regime \cite{mcgann_asynchronous_2024}.
% Although we modify MESA to use fully parallel updates in the synchronous comparison, applying our delayed-communication model to MESA is not a direct modification because its update rule is tied to a different communication protocol. 
However, in Table~\ref{tab:Edgewise_Result}, we report a separate experiment that evaluates CORD against MESA under the edge-based regime as in \cite{mcgann_asynchronous_2024}.

% As our formulation is applicable to general graph optimization on manifolds, we validated the proposed method using both Chordal and Geodesic distance metrics. The simulation assumes a synchronized global wall clock, where all robots perform optimization steps simultaneously at each iteration. Updated states are encapsulated in packets and broadcast to neighbors. To simulate physical communication latency, we implemented a delay mechanism where packets are buffered for a fixed number of steps before delivery. Consistent with prior work \cite{lian2018asynchronous, liu2015asynchronous}, the step size, mass, and damping parameters were determined empirically. For quantitative evaluation, we employed centralized solvers to establish baselines for the global optimum. Specifically, SE-Sync was used to obtain the global optimum ($f^*$) for Chordal distance experiments, allowing us to report the relative optimality gap, $(f - f^*) / f^*$. For Geodesic distance scenarios, we utilized GTSAM \cite{dellaert2012factor} to compute the reference cost of the centralized solver.

\begin{figure}[t]
    \centering
    \includegraphics[width=1\columnwidth, trim=20 20 20 0, clip]{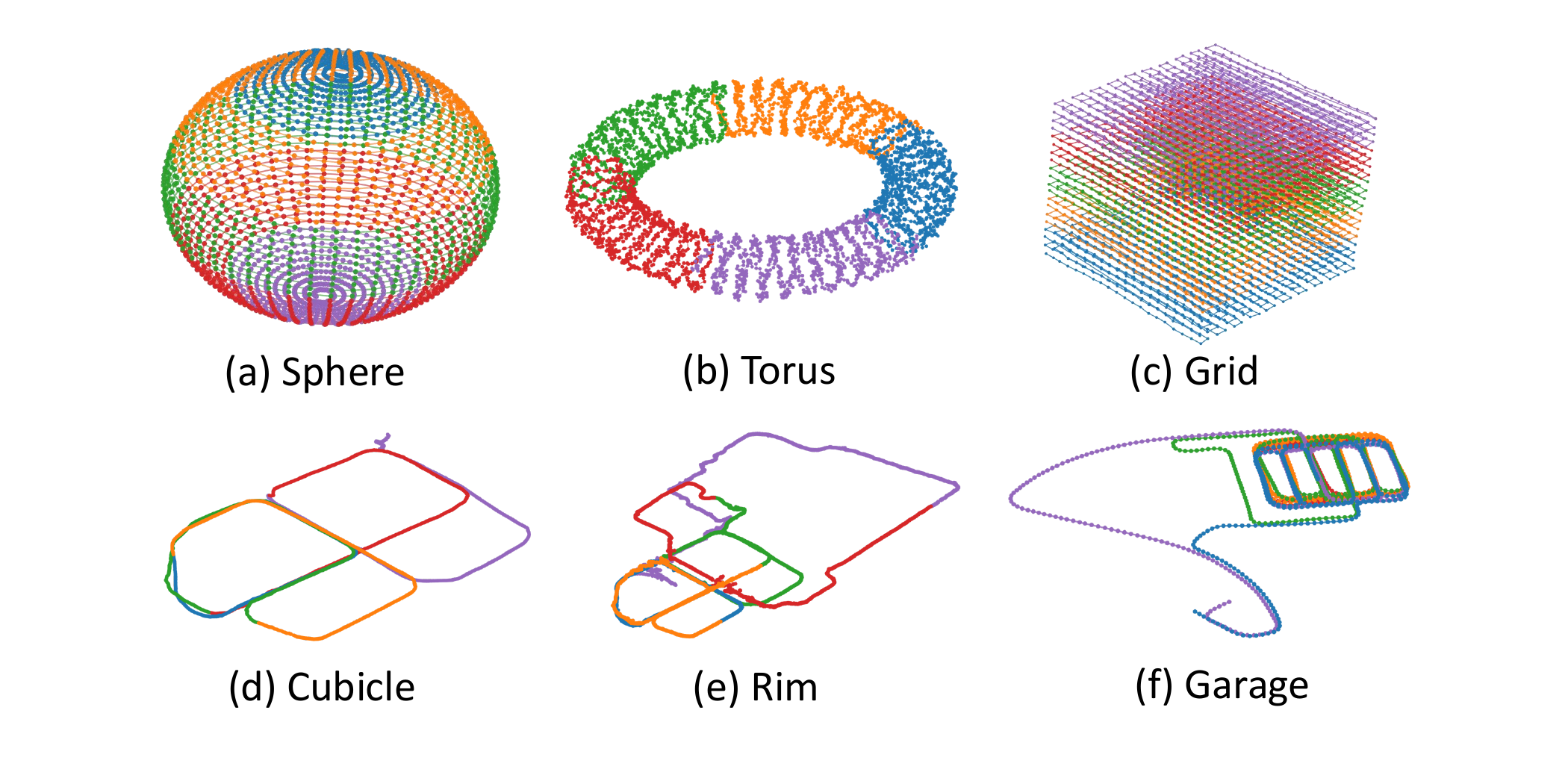}
    \caption{Trajectory estimates returned by the proposed algorithm on benchmark datasets. Different colors correspond to different robots.}
    \label{fig:qualitative}
\end{figure}

\subsection{Evaluation under Synchronous Communication}
\label{sec:experiment:sync}

In this subsection, we perform experiments under the synchronous regime without communication delay.
All methods are initialized using the same initialization scheme as implemented in AMM-PGO \cite{fan_majorization_2024}.
In Tab. \ref{tab:Chordal} and Tab. \ref{tab:Geodesic}, we report the cost achieved by each method after 100 iterations.
The costs of the initial and reference solutions are shown as $\mathcal{C}^{(0)}$ and $\mathcal{C}^\ast$, respectively.
In addition, Tab. \ref{tab:Chordal} also show the size of each dataset including the number of nodes and edges.
\name achieved the lowest error across nearly all 3D benchmark datasets for both metrics, with the exception of the \texttt{Sphere} dataset where AMM-PGO performed better, and the \texttt{Rim} dataset where MESA was better. Notably, our method outperforms AMM-PGO on the \texttt{Cubicle}, \texttt{Rim}, and \texttt{Garage}, demonstrating the validity of our algorithm on real-world data. 
Across all chordal datasets, \name consistently achieves lower error than the DJ baseline, indicating a strict improvement over first-order preconditioned updates.
This gap highlights the benefit of the proposed dynamical system formulation, where inertial and damping effects lead to faster convergence than purely gradient-based iterations.
% Furthermore, when compared to DJ, which exhibited the highest error across all Chordal datasets despite its general applicability, our results confirm that ODE is not only versatile but also demonstrates rapid convergence properties.

% All algorithms were evaluated on public PGO datasets (\texttt{SmallGrid3D, Sphere2500, torus3D, Grid3D, Cubicle, Rim, Garage}) initialized via distributed Nesterov’s accelerated chordal initialization \cite{fan_majorization_2020}. To simulate a collaborative SLAM mission involving five robots, each dataset was partitioned into five segments. 

% In the synchronous setting, we benchmarked our proposed method against state-of-the-art algorithms for Chordal and Geodesic distance minimization. Specifically, we employed AMM-PGO \cite{fan_majorization_2020} as the baseline for the Chordal metric, while MESA \cite{mcgann_asynchronous_2024} was selected as the comparative standard for Geodesic distance. Additionally, DJ \cite{choudhary_distributed_2017} was included as a general baseline applicable to both metrics to demonstrate the versatile performance of our approach across different cost functions. 

Following \cite{fan_majorization_2024},
we also report the performance profile \cite{dolan2002benchmarking}, which visualizes the percentage of problems solved by as a function of iterations.
Given tolerance $\Delta$, we introduce a threshold for determining if each problem is solved as
$
    \mathcal{C}_{\Delta} = \mathcal{C}^* + \Delta(\mathcal{C}^{(0)} - \mathcal{C}^*).
$
At each iteration, if the cost falls below the threshold, the dataset is regarded as solved by that method. 
% Consequently, the performance profile of a PGO method is defined as the percentage of problems solved by each iteration. 
The results for evaluated tolerances $\Delta=0.005$ are presented in Fig. \ref{fig:perform_plot}. We plot the performance profiles over 1000 iterations. The results indicate that \name and AMM-PGO exhibit comparable performance, whereas DJ yields the smallest area under the curve (AUC), as expected. For chordal distance, \name achieves an AUC of 851.36, comparable to AMM-PGO's 855.93. Notably, while AMM-PGO is specialized to chordal PGO, our framework is versatile to handle other cost functions including geodesic distance formulation. In the case of geodesic distance, \name significantly outperforms the baseline algorithms, with the exception of \texttt{Rim} dataset. In particular, MESA is not included in the performance profile, as we observe that with its default parameter settings the method converges prematurely across all datasets and fails to reach the prescribed accuracy threshold.

The dominant cost in solving distributed PGO is approximately solving a local pose graph using second-order information each iteration.
In CORD and DJ this appears as the Hessian inversion (Algorithm~\ref{alg:distributed_optimization}, Line~12); in AMM-PGO [8] and MESA [7] it appears as minimizing the local majorized or augmented Lagrangian objective.
% \gray{Thus, CORD has the same asymptotic per-iteration complexity.}
To quantify the practical overhead, we report average iteration runtime and communication packet size on the largest \texttt{Rim} dataset using 32-bit floats, and compare against DJ since other baselines are implemented in C++.
CORD requires 101.9 ms (vs. 43.0 ms for DJ) and 5.9 KB (vs. 3.2 KB for DJ), while using 80 iterations (vs. 924 for DJ) to achieve $\Delta=10^{-2}$. This shows a favorable trade-off since CORD achieves substantially faster convergence while incurring modest overhead.

\input{Table_Geodesic}

\begin{figure}[!t]
    \centering
    \includegraphics[width=0.98\linewidth]{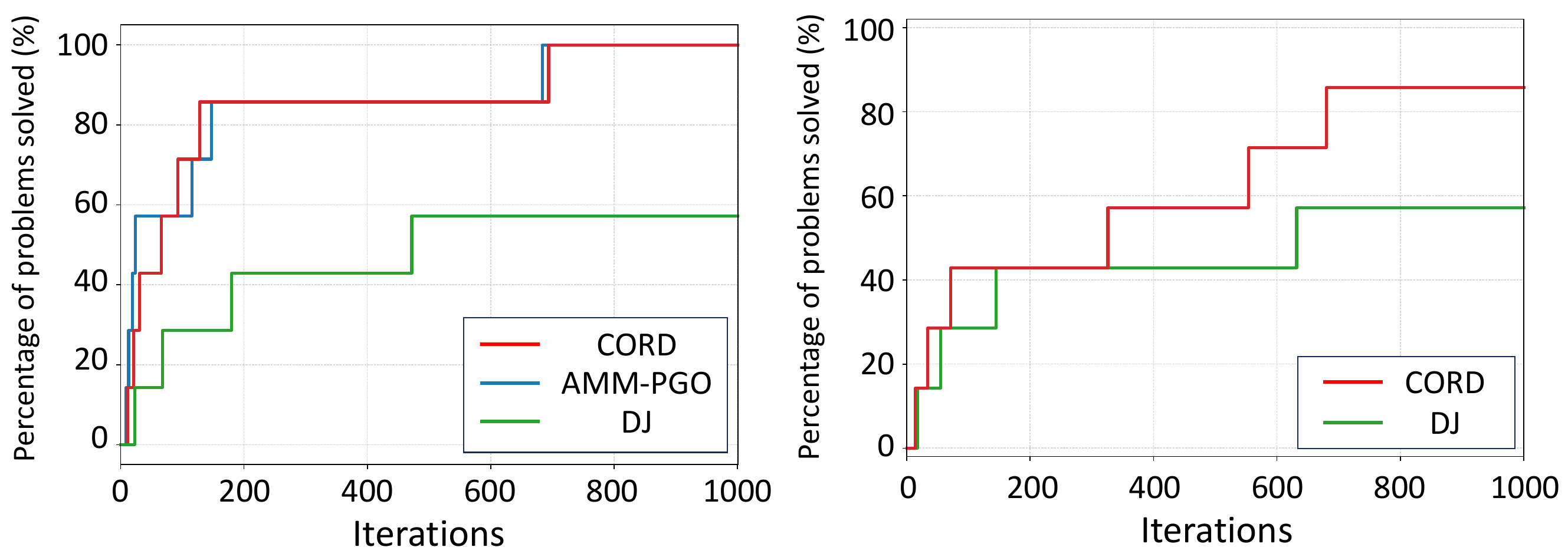}
    \caption{Performance profiles of synchronous methods over 1,000 iterations with an evaluation tolerance of $\Delta =0.005$ on 3D SLAM benchmark datasets. Left and right figures show chordal and geodesic PGO evaluations, respectively.}
    \vspace{-8pt}
    \label{fig:perform_plot}
\end{figure}

\subsection{Evaluation on Asynchronous Communication}
\label{sec:experiment:async}
We validated our approach under asynchronous conditions using both public benchmarks and simulation data. 
For the simulation, we evaluate on a different and more challenging initialization condition.
Specifically, we obtain the initial solution by propagating odometry measurements for each robot starting from its known initial pose in the global frame.
% Following Choudhary et al. \cite{choudhary_distributed_2017}, we initialize the trajectory estimates via distributed chordal initialization. Specifically, we first perform distributed rotation averaging to obtain rotation estimates. Subsequently, we fix these rotations and employ distributed Gauss-Seidel iterations to solve the reduced linear system for translations. For evaluation, we used DJ as a baseline and employed Chordal distance to measure the optimality gap relative to the global optimum obtained by SE-Sync. 

For the public benchmarks, a uniform delay of 5 steps is applied across all datasets, and results are summarized in the {``Async.''} column of Tab. \ref{tab:Chordal}. In the presence of communication delays, the use of outdated neighbor poses induces larger errors in the inter-robot edge gradients compared to the synchronized case. This error acts as a perturbation to the denominator in  \eqref{eq:dt}. Consequently, maintaining the same $\Delta t$ as in the synchronized setting would compromise convergence guarantees. To mitigate this, we employ a conservative step size by reducing $\Delta t$ by more than 90\%. Despite this restriction, \name consistently achieved a lower final cost than DJ. The inherent inertia allows the system to achieve acceleration even under delayed communication, thereby compensating for the reduced step size and facilitating faster convergence.

\begin{figure}[!t]
    \centering
    \includegraphics[width=0.98\linewidth]{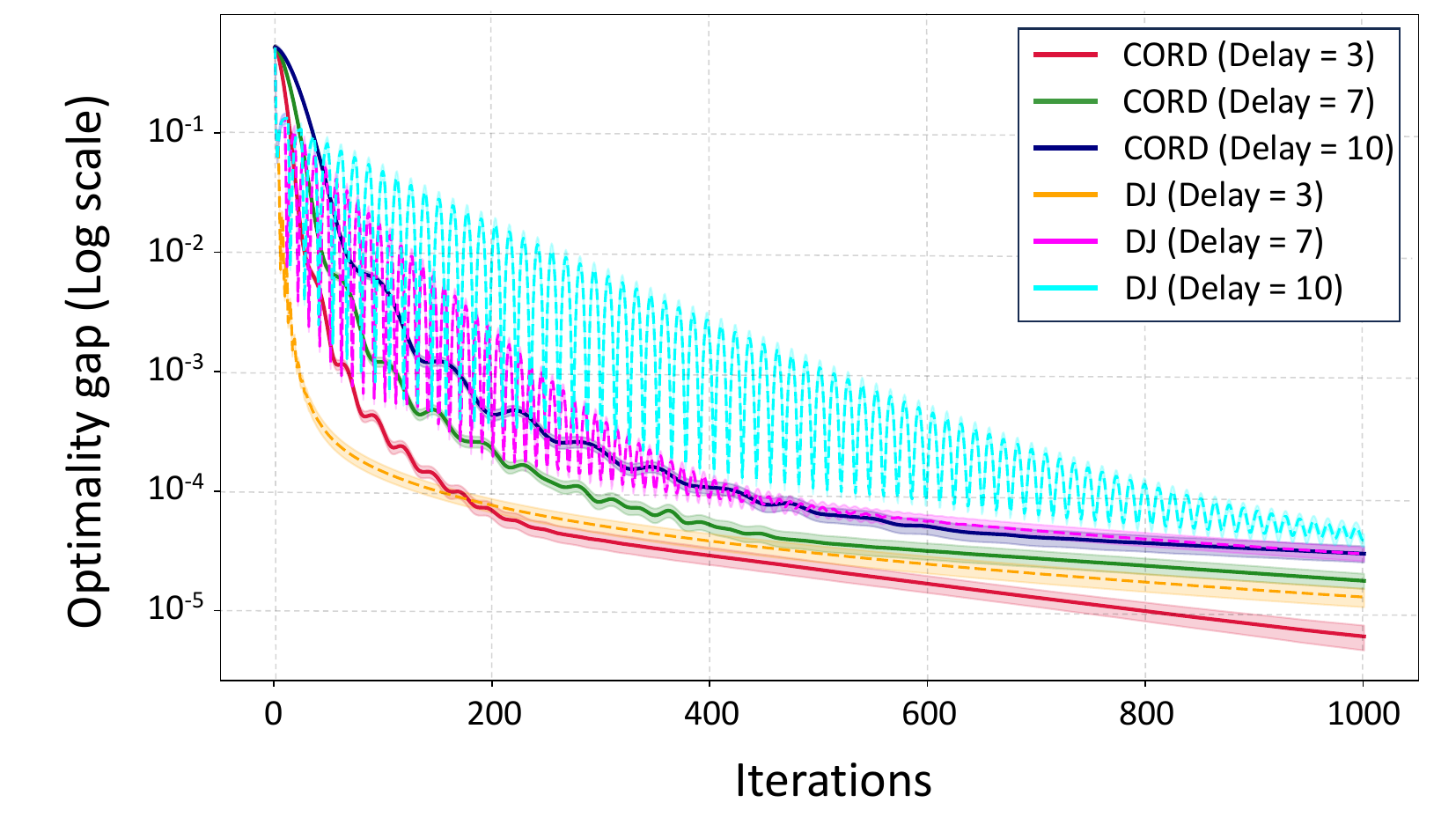}
    \caption{Optimality gap for varying delay steps in asynchronous simulations, plotted on a log scale. Solid lines indicate the mean performance averaged over 20 Monte Carlo runs, and shaded areas represent the standard deviation.}
    % \vspace{-20pt}
    \label{fig:async_plot}
\end{figure}
\input{Table_edgewise}

To provide further insights on the algorithm performance, we include additional simulation experiments with varying delay conditions.
We simulate a multi-robot SLAM scenario shown in \cref{fig:teaser} with four robots where each robot follows a 3D grid trajectory consisting of 125 nodes. 
The robots are distributed in a two by two grid formation. 
% Loop closures are generated within a 1.4 meter radius and at least two edges are guaranteed to connect each pair of robots. 
Loop closures were generated within a 1.4 m radius with a probability of 0.2 for intra-robot edges and 0.3 for inter-robot edges. 
Measurements were corrupted by Gaussian noise where standard deviations were uniformly sampled from ranges of 0.05 m to 0.15 m for translation and 1.0 degree to 3.0 degrees for rotation for intra-robot edges.
For inter-robot edges, we employ higher noise ranges of 0.10 m to 0.30 m for translation and 3.0 degrees to 10.0 degrees for rotation.
\cref{fig:async_plot} shows the optimality gap over 1000 iterations.
Regarding DJ, using the standard step size of $1.0$ causes severe oscillations and divergence. 
Similar to ASAPP \cite{tian_asynchronous_2020}, we empirically determine the maximum step size for DJ that ensured convergence through manual tuning. 
Overall, \name achieved a lower final cost than DJ across all delay steps.
When comparing performance under identical delay conditions,  \name initially exhibits a higher cost due to the lower step size. However, the DJ trajectory exhibits oscillations proportional to the delay length, leading to significant slowdown in convergence. This vulnerability stems from the fully parallel nature of the DJ update scheme, which is highly susceptible to outdated neighbor information. As communication delays increase, the discrepancy between the available neighbor poses and their actual states increases. This mismatch induces erroneously large gradients along inter-robot edges, leading to the instability of system. In contrast, even though the proposed \name framework also operates in a parallel manner, it remains robust to such perturbations due to the built-in inertia and damping of the system. Furthermore, by utilizing the communicated body velocities to predict the current states of neighbors, our method effectively compensates for communication latencies.
This predictive mechanism results in the smooth convergence trajectory observed in the results, distinct from the oscillatory behavior of DJ.

% \YT{I suggest that we remove results generated by dataset-specific tuning. Can we integrate Table IV into Table I to replace those results: row 1 in Table IV replaces the Synchronous CORD column, and row 2 can be a new column in Table I.}

% \YT{Do we have constant parameter results for CORD to update Table II (geodesic distance)?}

% \YT{The hyperparameter setting discussion and tables in the appendix can be updated or removed, since we use constant parameters now}

We further evaluate CORD under more challenging asynchronous communication conditions beyond constant delays. Specifically, we introduce $10\%$ random packet loss and randomized communication delays in the range $[1,10]$ while using the same fully parallel asynchronous update protocol and a single constant parameter setting, $d=4$, $m=0.7$, and $\Delta t=0.2$. As shown in the last column of Table~\ref{tab:Chordal}, CORD remains stable and continues to reduce the objective under these non-ideal communication conditions. These results provide additional evidence that the proposed dynamics are robust to packet loss and heterogeneous delays, beyond the constant-delay settings considered above.

\input{Table_ablation}
\subsection{Evaluation under Edge-Based Communication}
\label{sec:edge_based_eval}

To further evaluate CORD under an alternative distributed communication protocol, we adapt CORD to the edge-based regime used by MESA and compare on the three datasets considered in~\cite{mcgann_asynchronous_2024}. In this regime, one pair of neighboring robots communicates at each iteration without delay, providing a complementary setting to the fully parallel asynchronous protocol considered above. As shown in Table~\ref{tab:Edgewise_Result}, CORD achieves lower costs after 100 iterations using a single constant parameter setting, $d=4$, $m=0.7$, and $\Delta t=0.1$.

\subsection{Ablation Study}
\label{sec:experiment:ablation}

In this ablation study, we investigate the impact of different features of the proposed algorithm.
% using a position-dependent Hessian as the metric tensor and neighbor state prediction via body velocity. 
Specifically, we compare our approach against two variants: 
(i) fixing the mass and damping term $M,D$ based on the initial Hessian, 
and (ii) using received neighbor poses directly without the predictive extrapolation as in line~\ref{alg:distributed_optimization:extrapolate} of \Cref{alg:distributed_optimization}. 
For this purpose, we generate a set of datasets using the same simulation suite described in Sec. \ref{sec:experiment:async} and set a fixed communication delay of 7 steps. 
Performance was evaluated using the chordal distance optimality gap over 1000 iterations.
As shown in Tab.~\ref{tab:ablation_dict}, while the state-dependent $M$ and $D$ yield generally better performance, the improvement is overall marginal. 
This validates the constant setting as a competitive alternative that provides additional computational efficiency.
% This validates our suggestion that the choice between the two schemes represents a trade-off between computational cost and adaptivity. 
In contrast, our results show that the neighbor pose prediction plays a critical role.
Omitting the velocity-based prediction causes the system to diverge due to inaccurate neighbor information. 
This instability arises because large residuals in inter-robot edges increase the gradient $\nabla C_k$ in the denominator of \eqref{eq:dt}. 
Consequently, a small value of $\Delta t$ is needed to prevent divergence, which inevitably leads to a higher final cost.

\section{Discussion}

\subsection{Constant Mass Assumption}

While the proposed \name framework demonstrates strong performance in both synchronous and asynchronous distributed PGO settings, the current convergence analysis is restricted to the synchronous case with a constant mass matrix. Extending the theory to explicitly account for communication delays and state-dependent $M(X)$ would further close the gap between the analysis and the practical implementation.

Nevertheless, we empirically observe that the constant-mass analysis remains informative for the state-dependent variant. Near the attained minimizers, the mass matrix changes slowly, and the resulting dynamics can be viewed as a small perturbation of the analyzed system. To support this observation, Fig.~\ref{fig:MassEnergy} reports the relative mass variation,
\mbox{$\|M^k - M^{k-1}\|_F / \|M^k\|_F$},
together with the normalized energy convergence,
\mbox{$(E^k - E^*)/E^*$},
for the state-dependent setup with a constant set of parameters. After a short initial transient, the relative mass variation remains below $0.1\%$ for most iterations across all datasets, and the energy decreases monotonically except for a minor initial overshoot.

A natural next step would be to formalize both state dependence and communication delay as bounded perturbations of the current dynamics, thereby extending the convergence guarantees to more realistic distributed settings.

\subsection{Parameter Tuning}

In the current implementation, the CORD parameters are selected according to their dynamical roles: the damping coefficient $d$ controls early-stage energy dissipation, the mass $m$ affects acceleration, and $\Delta t$ determines the maximum integration step size while ensuring numerical stability. 
Although some experiments use dataset-specific parameter tuning, we observe that CORD is not highly sensitive to the choice of parameters.
In particular, a single constant setting, $d=2$, $m=0.8$, and $\Delta t=1$, achieves performance close to the per-dataset tuned setting in the synchronous chordal case, as shown in Table~\ref{tab:Chordal}. Under delayed communication, we further observe that $m$ and $d$ remain largely transferable across datasets, while $\Delta t$ should be reduced as the communication delay increases; see Table~\ref{tab:Delay_Params} in the appendix. This behavior is consistent with the theoretical insight from ASAPP~\cite{tian_asynchronous_2020}, where smaller step sizes improve stability under asynchronous updates. Adaptive parameter selection is an important direction for future work.

\subsection{Extensions to General Factor Graphs}

Although this work is formulated in the context of PGO, the underlying continuous-time Riemannian dynamics are not restricted to pairwise pose measurements.
Because our formulation builds upon the continuous-time dynamics on matrix Lie groups studied in \cite{bloch1996euler}, it inherently applies to spaces beyond SE(3), although it does not generalize to arbitrary Riemannian manifolds.
This theoretical foundation opens up an exciting direction for future work on distributed factor graph optimization \cite{gtsam} that involves optimization variables over other matrix Lie groups.

% \YT{Instead of the above, we can clarify this comment from the rebuttal: ``Because our derivation is
% based on [11], the method applies to Lie groups and not
% general Riemannian manifolds; we will clarify this throughout
% the paper.'' Optionally we can discuss generalizing to other Lie groups as future work here.}

\begin{figure}[!t]
    \centering
    \includegraphics[width=0.98\linewidth]{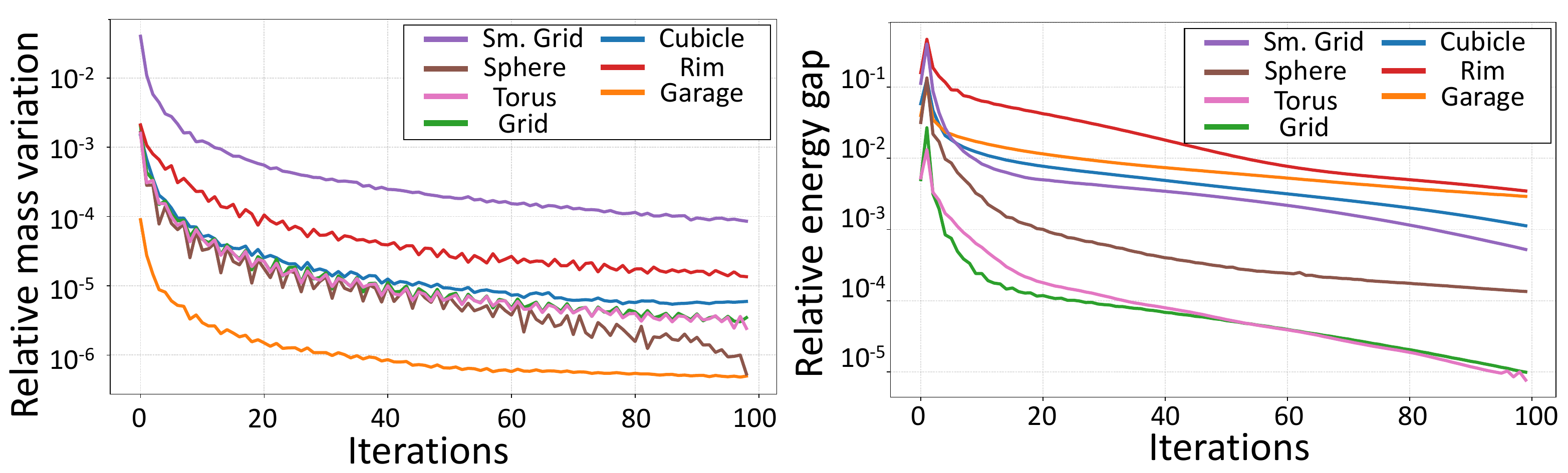}
    \captionsetup{font=footnotesize}
    \caption{Relative mass variation and energy convergence (state-dependent, synchronous, chordal) with constant parameters $d=2, m=0.8, \Delta t=1$.}

    \label{fig:MassEnergy} 
\end{figure}

\section{Conclusion} 
\label{sec:conclusion}
We presented \name, a novel framework for distributed PGO by formulating the problem as a second-order continuous dynamical system on Riemannian manifolds. 
By modeling the optimization variables as massive particles subject to dissipation, we showed that the equilibrium states of the system correspond to the first-order critical points of the original optimization problem. 
Our derived equations of motion generalize classical update rules like Gauss-Newton and Riemannian Gradient Descent. 
Furthermore, we presented a convergence analysis of the proposed algorithm, and established conditions that guarantee energy dissipation after discretization.
% Assuming practical conditions such as bounded gradients, we established the global convergence of the semi-implicit integration of the system, demonstrated its applicability to distributed environments, and showed the existence of a step size that guarantees the energy dissipation property of the continuous system.
Numerical evaluations on public benchmarks and simulation datasets validated the effectiveness of our approach. In synchronous settings, the proposed method achieved accuracy comparable to state-of-the-art distributed algorithms. 
More importantly, in asynchronous scenarios with varying communication delays, our solver demonstrated superior resilience and faster convergence speed, confirming the practical benefits of the proposed dynamical system-based solver. 
% Future work could explore adaptive mass and damping matrices to enhance conditioning and investigate the application of this dynamical framework to other geometric estimation problems beyond PGO.

\section*{Acknowledgments}
M. Ghaffari was supported by AFOSR MURI FA9550-23-1-0400 and AFOSR YIP FA9550-25-1-0224.

\clearpage
%% Use plainnat to work nicely with natbib. 
{\small
\bibliographystyle{unsrtnat}
\bibliography{references, dpgo_library}
}
\newpage 
\input{supplementary}
\end{document}

%% file: Table_Chordal.tex
\begin{table*}[t]
    \renewcommand{\arraystretch}{1.5}
    \setlength{\tabcolsep}{0.25em}
    \centering
    \caption{Optimality gap achieved by synchronous (left) and asynchronous (right) methods after 100 iterations. Best and second best methods are shown in \textbf{bold} and \underline{underline}, respectively. For CORD, we report results with both dataset-specific parameter tuning and a constant, dataset-independent parameter setting, denoted as CORD (const.).}
    \resizebox{\textwidth}{!}{%
    \begin{tabular}{|c||c|c|c|c||c|c|c|c||c|c|c|}
        \hline
        \multirow{3}{*}{Dataset} & \multirow{3}{*}{\# Nodes} & \multirow{3}{*}{\# Edges} & \multirow{3}{*}{$\mathcal{C}^{(0)}$} & \multirow{3}{*}{$\mathcal{C}^{*}$} & \multicolumn{7}{c|}{$\mathcal{C}^{(100)}$}\\
        \cline{6-12}
        & & & & & \multicolumn{4}{c||}{Synchronous} & \multicolumn{3}{c|}{Asynchronous} \\
        \cline{6-12}
        & & & & & \multirow{2}{*}{AMM-PGO \cite{fan_majorization_2024}} & \multirow{2}{*}{DJ} & \multirow{2}{*}{CORD} & \multirow{2}{*}{CORD (const.)} & \multicolumn{2}{c|}{Delay = 5} & Rand. Delay\\
        \cline{10-12}
        & & & & & & & & & DJ & CORD & CORD (const.) \\
        \hline
    {\sf Sm. Grid} & 125 & 297 & $1.5614\times 10^3$ & $1.0254\times 10^3$
     & $\mathbf{1.0254\times 10^{3}}$ & $1.0299\times 10^3$ & $\mathbf{1.0254\times 10^3}$ & \uline{$1.0256\times 10^3$} 
     & \uline{$1.0377\times 10^3$} & $\mathbf{1.0349\times 10^3}$ & $1.0313\times 10^3$ \\
        \hline
    {\sf Sphere} & 2500 & 4949 & $1.9709\times 10^3$ & $1.6870\times 10^3$
     & $\mathbf{1.6870\times 10^{3}}$ & $1.6881\times 10^3$ & \uline{$1.6872\times 10^3$} & \uline{$1.6872\times 10^3$} 
     & \uline{$1.6998\times 10^3$} & $\mathbf{1.6966\times 10^3}$ & $1.6886\times 10^3$ \\
        \hline
    {\sf Torus} & 5000 & 9048 & $2.4668\times 10^4$ & $2.4227\times 10^4$
     & $\mathbf{2.4227\times 10^{4}}$ & $2.4230\times 10^{4}$ & $\mathbf{2.4227\times 10^4}$ & $\mathbf{2.4227\times 10^4}$ 
     & \uline{$2.4257\times 10^{4}$} & $\mathbf{2.4250\times 10^4}$ & $2.4233\times 10^4$ \\
        \hline
    {\sf Grid} & 8000 & 22236 & $8.7265\times 10^4$ & $8.4319\times 10^4$
     & $\mathbf{8.4319\times 10^{4}}$ & $8.4327\times 10^{4}$ & $\mathbf{8.4319\times 10^4}$ & $\mathbf{8.4319\times 10^4}$ 
     & \uline{$8.4356\times 10^{4}$} & $\mathbf{8.4346\times 10^4}$ & $8.4328\times 10^4$ \\
        \hline
    {\sf Cubicle} & 5750 & 16869 & $8.3504\times 10^2$ & $7.1713\times 10^2$
     & $7.1799\times 10^2$ & $7.2222\times 10^2$ & $\mathbf{7.1752\times 10^2}$ & \uline{$7.1756\times 10^2$} 
     & \uline{$7.2902\times 10^2$} & $\mathbf{7.2729\times 10^2}$ & $7.2351\times 10^2$ \\
        \hline
    {\sf Rim} & 10195 & 29743 & $8.0840\times 10^3$ & $5.4609\times 10^3$
     & \uline{$5.4817\times 10^3$} & $5.6688\times 10^3$ & $\mathbf{5.4748\times 10^3}$ & $\mathbf{5.4748\times 10^3}$ 
     & \uline{$5.8761\times 10^3$} & $\mathbf{5.8424\times 10^3}$ & $5.7364\times 10^3$ \\
             \hline
    {\sf Garage} & 1661 & 6275 & $1.4175\times 10^0$ & $1.2625\times 10^0$
     & \uline{$1.2683\times 10^0$} & $1.2761\times 10^0$ & $\mathbf{1.2655\times 10^0}$ & $\mathbf{1.2655\times 10^0}$ 
     & \uline{$1.2866\times 10^0$} & $\mathbf{1.2857\times 10^0}$ & $1.2797\times 10^0$ \\
        \hline
    \end{tabular}%
    }
\label{tab:Chordal}
\vspace{-0.75em}
\end{table*}

%% file: Table_Geodesic.tex
\begin{table}[t]
    \renewcommand{\arraystretch}{1.5}
        \setlength{\tabcolsep}{0.25em}
    \centering
    \caption{Optimality gap achieved by synchronous methods after 100 iterations, under the geodesic distance formulation of PGO. Best and second best methods shown in \textbf{bold} and \underline{underline}, respectively.}  
    \resizebox{\columnwidth}{!}{
    \begin{tabular}{|c||c|c||c|c|c|}
        \hline
        \multirow{2}{*}{Dataset}&\multirow{2}{*}{$\mathcal{C}^{(0)}$} &\multirow{2}{*}{$\mathcal{C}^\ast$} & \multicolumn{3}{c|}{ $\mathcal{C}^{(100)}$}\\
        \cline{4-6}
        & & & MESA \cite{mcgann_asynchronous_2024}& DJ & CORD \\
        \hline 
    {\sf Sm. Grid} &$6.5821\times 10^{2}$  & $3.4130 \times 10^2$
     & $3.5365\times 10^{2}$ & \uline{$3.4414\times 10^{2}$} & $\mathbf{3.4320\times 10^{2}}$  \\
        \hline
    {\sf Sphere} & $7.6392\times 10^2$ & $5.7657 \times 10^2$ 
     & $5.8044\times 10^{2}$ & \uline{$5.7719\times 10^{2}$} & $\mathbf{5.7671\times 10^{2}}$  \\
        \hline
    {\sf Torus} & $9.2693 \times 10^3$ & $8.9719 \times 10^3$ 
     & $8.9770\times 10^3$ & \uline{$8.9736\times 10^3$} & $\mathbf{8.9720\times 10^3}$  \\
        \hline
    {\sf Grid} & $3.3123 \times 10^4$  & $3.0968 \times 10^4$ 
     & $3.0986\times 10^4$ & \uline{$3.0973\times 10^4$} & $\mathbf{3.0968\times 10^4}$  \\
        \hline
    {\sf Cubicle} & $3.9095 \times 10^2$ & $3.2086 \times 10^2$ 
     & \uline{$3.2337\times 10^2$} & $3.2383\times 10^2$ & $\mathbf{3.2226\times 10^2}$  \\
        \hline
    {\sf Rim} & $3.8709 \times 10^3$& $2.2367 \times 10^3$ 
     & $\mathbf{2.2476\times 10^3}$ & $2.3991\times 10^3$ & \uline{$2.2753\times 10^3$}  \\
        \hline
    {\sf Garage} & $0.7083 \times 10^{0}$& $0.6234 \times 10^{0}$ 
     & $0.6582\times 10^0$ & \uline{$0.6309\times 10^0$} & $\mathbf{0.6252\times 10^0}$  \\
        \hline
        
    \end{tabular}}
    \label{tab:Geodesic}
\vspace{-0.75em}
\end{table}

%% file: Table_edgewise.tex
\begin{table}[!t]
    \centering
    \footnotesize
    \renewcommand{\arraystretch}{1.1} 
    \setlength{\tabcolsep}{0.3em} 
    \captionsetup{font=footnotesize}
    \caption{Evaluation under edge-based regime and geodesic distance. CORD uses a constant set of parameters: $d=4$, $m=0.7$, $\Delta t=0.1$.}
    \label{tab:Edgewise_Result}
    \begin{tabular}{|l|ccc|}
        \hline
        Method & {\sf Sphere} & {\sf Torus} & {\sf Garage} \\ \hline
        MESA [7] & $7.1935 \times 10^2$ & $1.3321 \times 10^4$ & $0.7045 \times 10^0$ \\ \hline
        CORD (const.) & $6.9539 \times 10^2$ & $1.2158 \times 10^4$ & $0.6537 \times 10^0$ \\ \hline
    \end{tabular}
    \vspace{-0.75em}
\end{table}

%% file: Table_ablation.tex
\begin{table}[t]
\centering
\caption{Ablation study results reporting optimality gap. 
\texttt{State Dep.}: using state-dependent mass $M$ and damping $D$; 
\texttt{Nbr. Pred}: using predicted neighbor pose from velocity-based extrapolation.
Results are shown as Mean $\pm$ Std. Dev. across 3 datasets.}
\label{tab:ablation_dict}
\resizebox{0.8\columnwidth}{!}{ 
\begin{tabular}{ccc}
\toprule
State Dep. & Nbr. Pred. & Opt. Gap $\downarrow$ (Mean $\pm$ Std.) \\
\midrule
$\times$ & $\times$     & $1.53 \times 10^{-2} \pm 0.23 \times 10^{-2}$ \\ % Ablation 4
$\checkmark$ & $\times$ & $1.61 \times 10^{-2} \pm 0.18 \times 10^{-2}$ \\ % Ablation 3
$\times$ & $\checkmark$ & $0.29 \times 10^{-2} \pm 0.08 \times 10^{-2}$ \\ % Ablation 2
$\checkmark$ & $\checkmark$ & $\mathbf{0.28 \times 10^{-2}} \pm 0.09 \times 10^{-2}$ \\ % Ablation 1
\bottomrule
\end{tabular}%
} 
\vspace{-5mm}
\end{table}

%% file: supplementary.tex
\clearpage
\setcounter{page}{1}
\setcounter{section}{0}

\twocolumn[
    \begin{center}
        \Large
        \textbf{Distributed Pose Graph Optimization via Continuous Riemannian Dynamics}    
        \vspace{0.4cm}
        \\
        Appendix
        \vspace{0.8cm} 
    \end{center}
]

\setcounter{subsection}{0}

\section{Variational Principles on Lie Group}
In this section, we describe two fundamental principles governing dynamic systems evolving on matrix Lie Groups. To this end, we adopt a variational approach based on Hamilton's principle, which provides a unified framework for deriving both conservative and dissipative dynamics. 

Formally, the system dynamics are characterized by the standard Lagrangian $L: T\mathcal{G} \to \mathbb{R}$, defined on the tangent bundle as the difference between the kinetic energy $\mathcal{T}(X, \dot{X})$ and the potential energy $V(X)$. 
Instead of using the standard definition, we employ the reduced Lagrangian $l: \mathcal{G} \times \mathfrak{g} \to \mathbb{R}$ following \citeapp{bloch1996euler_app}.
% This provides an equivalent formulation of the dynamics, given that the kinetic energy is determined solely by the body velocity $\xi$ and is invariant to the global configuration $X$.
By rewriting the kinetic energy in the body frame as $\frac{1}{2} \langle \xi, \xi \rangle_{\mathbb{I}}$, where $\mathbb{I}$ is the symmetric positive-definite inertia tensor defining the metric on the algebra, we transition to the reduced Lagrangian $l: \mathcal{G} \times \mathfrak{g} \to \mathbb{R}$, defined as:
\begin{equation}
l(X, \xi) := \frac{1}{2} \langle \xi, \xi \rangle_{\mathbb{I}} - V(X),
\end{equation}
where $V(X)$ denotes the potential energy of the system.

Specifically, Hamilton’s principle requires considering a family of variations of the trajectory. Since the configuration space is a Lie group, we define the variation $X(t, \epsilon) \in G$ using the exponential map \citeapp{lee2007lie}:\begin{equation}X(t, \epsilon) = X(t) \exp(\epsilon \eta(t)), \label{eq:variation_def}\end{equation}where $\epsilon$ is a scalar parameter, and $\eta(t) \in \mathfrak{g}$ is an arbitrary curve in the Lie algebra (vanishing at the endpoints) representing the virtual displacement in the body frame. The variation $\delta X$ is then given by:
\begin{equation}\delta X = \frac{\partial}{\partial\epsilon} X(t, \epsilon)\Big|_{\epsilon=0} = X(t) \eta(t). \label{eq:delta_X}\end{equation} 
Based on the definition of body velocity $\xi = X^{-1} \dot{X}$ and under the standard assumption that the order of differentiating with respect of $t$ and $\epsilon$ commutes, the variation $\delta \xi$ is derived as:
\begin{equation}\begin{aligned}\delta\xi &= \frac{\partial}{\partial\epsilon} (X^{-1} \dot{X})\Big|_{\epsilon=0} \\
&= \frac{\partial X^{-1}}{\partial\epsilon}\Big|_{\epsilon=0} \dot{X} + X^{-1} \frac{\partial \dot{X}}{\partial\epsilon}\Big|_{\epsilon=0} \\&= (-X^{-1} (\delta X) X^{-1}) \dot{X} + X^{-1} \frac{\partial}{\partial t} (\delta X).\end{aligned}\end{equation}
Substituting $\delta X = X\eta$ from \eqref{eq:delta_X} into the above equation yields the following Lin constraint \citeapp{cendra1999lagrangian}:\begin{equation}\begin{aligned}\delta\xi &= -X^{-1} (X\eta) X^{-1} \dot{X} + X^{-1} \frac{\partial}{\partial t} (X\eta) \\&= -\eta (X^{-1} \dot{X}) + X^{-1} (\dot{X}\eta + X\dot{\eta}) \\&= -\eta \xi + \xi \eta + \dot{\eta} \\&= \dot{\eta} + [\xi, \eta],\end{aligned}\end{equation}where we used the identity $\dot{X} = X\xi$ and the Lie bracket $[\xi, \eta] = \xi\eta - \eta\xi$.

Now, we are ready to state Hamilton's principle, which requires the variation of the action to be zero:

\begin{definition}[Variational Principle for Conservative System \citeapp{bloch1996euler_app}]
Consider a system with configuration space $\mathcal{G}$ and reduced Lagrangian $l: \mathcal{G} \times \mathfrak{g} \to \mathbb{R}$. For a curve $X(t) \in \mathcal{G}$ with fixed endpoints, the action is stationary with respect to variations:
\begin{equation}
    \delta \mathcal{S} = \delta \int l(X, \xi) \, dt = 0.
\end{equation}
\end{definition} While Hamilton's principle governs conservative dynamics, practical systems often involve non-conservative forces such as dissipation. The Lagrange-d'Alembert principle generalizes the variational condition to incorporate these effects. The variation of the action is no longer zero but equals the virtual work done by the non-conservative forces. Let $f \in \mathfrak{g}^*$ denote the external force defined in the body frame, where $\mathfrak{g}^*$ represents the dual space of the Lie algebra. Then, the dynamics are described by the following forced variational principle.

\begin{definition}[Lagrange-d’Alembert Principle \citeapp{bloch1996euler_app}]\label{Def:LagrangeDalembert}
Consider a system with configuration space $\mathcal{G}$ and reduced Lagrangian $l: \mathcal{G} \times \mathfrak{g} \to \mathbb{R}$, subject to an external force $f \in \mathfrak{g}^*$ defined in the dual space of the Lie algebra. For a curve $X(t) \in \mathcal{G}$ with fixed endpoints, the dynamics satisfy the variational condition:
\begin{equation}\label{eq:LagrangeDalembert}
    \delta \int l(X, \xi) \, dt + \int \langle f, \eta \rangle \, dt = 0,
\end{equation}
where $\langle \cdot, \cdot \rangle$ denotes the duality pairing between the cotangent vector $f$ and the tangent vector $\eta$. 
\end{definition} In the context of matrix Lie groups, we identify the Lie algebra $\mathfrak{g}$ and the tangent space $T_X \mathcal{G}$ with their respective dual spaces via the standard trace inner product, defined as $\langle A, B \rangle := \operatorname{tr}(A^\top B)$. Consequently, the duality pairing in \eqref{eq:LagrangeDalembert} and the variational derivatives are expressed using this metric.
\section{Proofs}
\textit{Proof of Proposition} \ref{prop:damped_ep}: Let the symmetric positive-definite inertia be denoted by $\mathbb{I} =M$, the potential energy by $V(X) = \mathcal{C}(X)$, and the external force by $f = -D\xi$. The reduced Lagrangian is then given by $l = \frac{1}{2}\xi^\top M \xi -\mathcal{C}(X)$. Taking the variation of the action functional yields:
\begin{align}
    \delta S &= \int \langle\frac{\partial l}{\partial \xi}, \delta \xi\rangle + \langle\frac{\partial l}{\partial X}, \delta X\rangle dt\\ &= \int \langle M\xi, \delta \xi \rangle - \langle \frac{\partial \mathcal{C}}{\partial X}, \delta X \rangle dt \nonumber \\
    &= \int \langle M\xi, \underbrace{\dot{\eta} + [\xi, \eta]}_{\text{Lin constraint}} \rangle - \langle \frac{\partial \mathcal{C}}{\partial X}, X\eta \rangle dt \nonumber \\
    &= \int \langle M\xi, \dot{\eta} \rangle + \langle M\xi, \operatorname{ad}_\xi \eta \rangle - \langle \frac{\partial \mathcal{C}}{\partial X}, X\eta \rangle dt \nonumber \\
    &= \int \langle \underbrace{-\frac{d}{dt} (M\xi)}_{\text{Integration by Parts}}, \eta \rangle  + \langle {\operatorname{ad}^*_{\xi}(M\xi)} , \eta \rangle - \langle {X^\top \frac{\partial \mathcal{C}}{\partial X}}_{}, \eta \rangle dt.
\end{align} To obtain the final expression, we applied integration by parts to the first term using the vanishing boundary conditions of $\eta$. The second term is transformed using the definition of the co-adjoint operator, $\langle p, \operatorname{ad}_\xi \eta \rangle = \langle \operatorname{ad}^*_\xi p, \eta \rangle$. Finally, the third term utilizes the matrix trace identity $\langle A, XB \rangle = \langle X^\top A, B \rangle$ to isolate $\eta$. By including the virtual work done by the external force, we obtain:

\begin{align}
    &\int \langle-\frac{d}{d t} (M\xi), \eta \rangle  + \langle \text{ad}^*_{\xi}(M\xi) , \eta \rangle - \langle X^\top \frac{\partial C}{\partial X}, \eta \rangle  + \langle f,\eta\rangle dt   \\ = &\int \langle -\frac{d}{d t} (M\xi)+ \text{ad}^*_{\xi}(M\xi) - X^\top \frac{\partial C}{\partial X}-D\xi,\eta\rangle dt \nonumber \\ 
    = & 0. \nonumber 
\end{align} 
Since the variation $\eta$ is arbitrary, the integration must vanish. 

% Identifying the term $X^\top \frac{\partial \mathcal{C}}{\partial X}$ as the Riemannian gradient of the cost function, denoted by $\nabla \mathcal{C} \triangleq \frac{\partial\mathcal{C}}{\partial\xi}$ (corresponding to the gradient with respect to the right perturbation),
To rigorously identify the term $X^\top \frac{\partial \mathcal{C}}{\partial X}$ as the Riemannian gradient $\nabla \mathcal{C}$, we examine the time derivative of the cost function along a trajectory $X(t)$. Recall that the body velocity is defined as $\xi = X^{-1} \dot{X} \in \mathfrak{g}$. By the definition of the gradient on the Lie algebra, the rate of change of the cost function is given by the inner product of the gradient and the velocity:
\begin{equation}
    \frac{d}{dt} \mathcal{C}(X(t)) =  \lim_{h \to 0} \frac{\mathcal{C}( X \cdot \exp(t \xi) ) - \mathcal{C}(X)}{t} = \langle \nabla \mathcal{C}, \xi \rangle . \label{eq:time_deriv_def}
\end{equation} Alternatively, applying the chain rule directly in the ambient matrix space yields: \begin{align} \label{eq:ambient_chain}
    \frac{d}{dt} \mathcal{C}(X(t)) =&  \left\langle \frac{\partial \mathcal{C}}{\partial X}, \dot{X} \right\rangle\\ =& \left\langle \frac{\partial \mathcal{C}}{\partial X}, X \xi \right\rangle \nonumber \\ 
    = &  \left\langle X^\top \frac{\partial \mathcal{C}}{\partial X}, \xi \right\rangle. \nonumber
\end{align} Comparing \eqref{eq:time_deriv_def} and \eqref{eq:ambient_chain}, which must hold for any velocity $\xi$, we identify the gradient as:
\begin{equation}
    \nabla \mathcal{C} = X^\top \frac{\partial \mathcal{C}}{\partial X}.
\end{equation} As a result, we arrive at the damped Euler-Poincaré equation:
\begin{equation}
    \frac{d}{dt}(M\xi) = -\nabla \mathcal{C} - D\xi + \ad_{\xi}^*(M\xi). 
\end{equation}  \qed

\textit{Proof of Lemma} \ref{lemma:descent}: Let us define $\Delta T_k:= T_{k+1} - T_k$ and $\Delta \mathcal{C}_k := \mathcal{C}_{k+1} - \mathcal{C}_{k}$. Using Assumption \ref{lemma:descent} with $\eta = \xi_{k+1}\Delta t$, we have:
\begin{align}\label{eq:DeltaC}
    \Delta \mathcal{C}_k &= g_{X_k}(\xi_{k+1} \Delta t) -g_{X_k}(0)\\ &\le \langle \nabla g_{X_k}(0) , \xi_{k+1}\Delta t \rangle + \frac{L}{2} \|\xi_{k+1} \Delta t\|^2_M \nonumber \\ 
    &= \Delta t\xi_{k+1}^\top\nabla \mathcal{C}_k + \frac{L \Delta t^2}{2} \|\xi_{k+1}\|_M^2 \nonumber 
\end{align} Furthermore, the change in kinetic energy is given by:

\begin{align}\label{eq:DeltaT}
    \Delta T_k =& \frac{1}{2}\xi_{k+1}^\top M\xi_{k+1} - \frac{1}{2}\xi_k^\top M\xi_k \\ 
    =& \frac{1}{2}(\xi_k ^\top + \Delta t a_k^\top) M(\xi_k + \Delta t a_k) - \frac{1}{2}\xi_k^\top M\xi_k \nonumber \\ 
    =& \Delta t \xi_k^\top M a_k + \frac{\Delta t ^2}{2} \|a_{k}\|_M^2 \nonumber \\ =& \Delta t\xi_k^\top(-\nabla\mathcal{C}_k- D\xi_k) + \frac{\Delta t ^2}{2} \|a_{k}\|_M^2 \nonumber 
\end{align} We obtain the final expression since the co-adjoint term included in $Ma_k$ vanishes:
\begin{align}
\xi^\top_k \text{ad}^*_{\xi_k}(M\xi_k) =& \langle  \text{ad}^*_{\xi_k}(M\xi_k),\xi_k  \rangle \\  \nonumber
    =& \langle M\xi_k, \text{ad}_{\xi_k}(\xi_k)\rangle \\ \nonumber 
    =& \langle M\xi_k, [\xi_k,\xi_k]\rangle \\ \nonumber
    =& 0.
\end{align} Summing \eqref{eq:DeltaC} and \eqref{eq:DeltaT} yields:
\begin{align}\label{eq:DeltaE_proof}
    \Delta E_k =& \Delta T_k + \Delta\mathcal{C}_k \\ 
    \le& \Delta t(\xi^\top_{k+1} -\xi^\top_k)\nabla\mathcal{C}_k +\Delta t^2(\frac{L }{2} \|\xi_{k+1}\|_M^2 + \frac{1}{2} \|a_{k}\|_M^2) \nonumber \\ &-\Delta t \xi^\top_k D\xi_k \nonumber \\ \nonumber 
    =& \Delta t^2(a_k^\top \nabla \mathcal{C}_k + \frac{L }{2} \|\xi_{k+1}\|_M^2 + \frac{1}{2} \|a_{k}\|_M^2) - \Delta t\xi^\top_k D \xi_k \nonumber 
\end{align} \qed

\textit{Proof of Theorem} \ref{thm:convergence}: Recalling that $\| \cdot \|_M$ denotes the norm induced by the metric $M$ (i.e., $\|x\|_M = \sqrt{x^\top M x}$), the generalized Cauchy-Schwarz and Young's inequalities yield:
\begin{equation}
    a^\top_k \nabla C_k \le \|a_k\|_M\|\nabla \mathcal{C}_k\|_{M^{-1}} \le \frac{1}{2}\|a_k\|^2_M + \frac{1}{2} \|\nabla \mathcal{C}_k \|^2_{M^{-1}}.
\end{equation}  Consequently, using \eqref{eq:DeltaE_proof}, $\Delta E_k$ is bounded by:
\begin{equation}
\begin{aligned}
    \Delta E_k \le&  \Delta t^2(a_k^\top \nabla \mathcal{C}_k + \frac{L }{2} \|\xi_{k+1}\|_M^2 + \frac{1}{2} \|a_{k}\|_M^2) - \Delta t\xi^\top_k D \xi_k  \\  
    \le& \Delta t^2( \frac{1}{2}\|\nabla \mathcal{C}_k \|^2_{M^{-1}} + \frac{L }{2} \|\xi_{k+1}\|_M^2 +  \|a_{k}\|_M^2)  \\ &- \Delta t\xi^\top_k D \xi_k 
    \\\coloneq & B_k 
\end{aligned}    
\end{equation}
    
 A sufficient condition for energy dissipation (i.e., $\Delta E_k \le 0$) is $B_k \le 0$. Solving this inequality for $\Delta t$, we obtain:

\begin{equation}
    \Delta t \leq \frac{\xi_k^\top D\xi_k}{ \frac{1}{2} \| \nabla \mathcal{C}_k \|^2_{M^{-1}} + \frac{L}{2} \|\xi_{k+1}\|^2_M +\|a_k\|^2_M}.
\end{equation} \qed
\section{Experimental Details}

\raggedbottom
In this section, we provide the detailed hyperparameter configurations used in our experiments to ensure reproducibility. Specifically, we list the values for the damping coefficient ($d$), mass ($m$), and discretization time step ($\Delta t$) across different datasets and evaluation metrics. 
As discussed in the main paper, these parameters are selected to balance the convergence speed and numerical stability of the Riemannian dynamics. 
Table~\ref{tab:Chordal_Params} details the settings for the Chordal distance metric under both synchronous and asynchronous settings, while Table~\ref{tab:Geodesic_Params} lists the parameters for the Geodesic distance metric. Finally, Table~\ref{tab:Delay_Params} presents the parameter adjustments required for extreme communication delays.

\begin{table}[H]  
    \centering
    \renewcommand{\arraystretch}{1.3} 
    \setlength{\tabcolsep}{0.8em}     
    \caption{Hyperparameter settings for benchmark datasets using the Chordal distance metric. The table distinguishes between synchronous and asynchronous communication protocols.}
    \begin{tabular}{|c||c|c|c||c|c|c|}
        \hline
        \multirow{2}{*}{\textbf{Dataset}} & \multicolumn{3}{c||}{\textbf{Synchronous}} & \multicolumn{3}{c|}{\textbf{Asynchronous}} \\
        \cline{2-7}
        & $d$ & $m$ & $\Delta t$ & $d$ & $m$ & $\Delta t$ \\
        \hline
        {\sf Sm. Grid} & 1.5 & 1& 0.65 &4 &0.7 &0.1 \\
        \hline
        {\sf Sphere} &2  &0.7 &1 &4 &0.7 &0.1 \\
        \hline

        {\sf Torus} &2 &0.7 &1 & 4& 0.7& 0.1\\
        \hline
        {\sf Grid} & 2&0.7 & 1& 4& 0.7& 0.1\\
        \hline
        {\sf Cubicle} & 2& 0.7 & 1 & 4& 0.7& 0.1\\
        \hline
        {\sf Rim} & 2& 0.8& 1& 4& 0.7&0.1 \\
        \hline
        {\sf Garage} &2 & 0.8 & 1 & 3& 0.7&0.1 \\
        \hline
    \end{tabular}
    \label{tab:Chordal_Params}
\end{table}

\begin{table}[H] 
    \centering
    \renewcommand{\arraystretch}{1.3}
    \setlength{\tabcolsep}{1.5em}
    \caption{Hyperparameter settings for benchmark datasets using the Geodesic distance metric.}
    \begin{tabular}{|c||c|c|c|}
        \hline
        \textbf{Dataset} & $d$ & $m$ & $\Delta t$ \\
        \hline
        {\sf Sm. Grid} &4 &0.7 &0.7 \\
        \hline
        {\sf Sphere} &2 &0.7 &0.7 \\
        \hline
        {\sf Torus} &2 &0.7 &0.7 \\
        \hline
        {\sf Grid} &2 &0.7 &1 \\
        \hline
        {\sf Cubicle} & 3& 0.7& 0.7\\
        \hline
        {\sf Rim} & 6&1 &0.65 \\
        \hline
        {\sf Garage} & 2&0.7 &0.7 \\
        \hline
    \end{tabular}
    \label{tab:Geodesic_Params}
\end{table}

\begin{table}[H]
    \centering
    \renewcommand{\arraystretch}{1.3}
    \setlength{\tabcolsep}{1.5em} 
    \caption{Hyperparameter adjustments for the asynchronous delay robustness analysis. As the communication delay (Delay step) increases, the time step $\Delta t$ is adjusted to maintain system stability.}
    \begin{tabular}{|c||c|c|c|}
        \hline
        \textbf{Delay Step} & $d$ & $m$ & $\Delta t$ \\
        \hline
        3 & 5&0.45 &0.12 \\
        \hline
        7 & 5&0.45 & 0.075\\
        \hline
        10 &5 &0.45 &0.05 \\
        \hline
    \end{tabular}
    \label{tab:Delay_Params}
\end{table}

\makeatletter
\global\c@NAT@ctr=0
\makeatother

{\small
\bibliographystyleapp{unsrtnat} 
\bibliographyapp{appendix_references} }

%% file: main.bbl
\begin{thebibliography}{3}
\providecommand{\natexlab}[1]{#1}
\providecommand{\url}[1]{\texttt{#1}}
\expandafter\ifx\csname urlstyle\endcsname\relax
  \providecommand{\doi}[1]{doi: #1}\else
  \providecommand{\doi}{doi: \begingroup \urlstyle{rm}\Url}\fi

\bibitem[Bloch et~al.(1996)Bloch, Krishnaprasad, Marsden, and Ratiu]{bloch1996euler_app}
Anthony Bloch, PS~Krishnaprasad, Jerrold~E Marsden, and Tudor~S Ratiu.
\newblock The euler-poincar{\'e} equations and double bracket dissipation.
\newblock \emph{Communications in mathematical physics}, 175\penalty0 (1):\penalty0 1--42, 1996.

\bibitem[Lee et~al.(2007)Lee, Leok, and McClamroch]{lee2007lie}
Taeyoung Lee, Melvin Leok, and N~Harris McClamroch.
\newblock Lie group variational integrators for the full body problem.
\newblock \emph{Computer Methods in Applied Mechanics and Engineering}, 196\penalty0 (29-30):\penalty0 2907--2924, 2007.

\bibitem[Cendra et~al.(1999)Cendra, Holm, Marsden, and Ratiu]{cendra1999lagrangian}
Hern{\'a}n Cendra, Darryl~D Holm, Jerrold~E Marsden, and Tudor~S Ratiu.
\newblock Lagrangian reduction, the euler--poincaré equations, and semidirect products.
\newblock \emph{arXiv preprint chao-dyn/9906004}, 1999.

\end{thebibliography}


\begin{thebibliography}{42}
\providecommand{\natexlab}[1]{#1}
\providecommand{\url}[1]{\texttt{#1}}
\expandafter\ifx\csname urlstyle\endcsname\relax
  \providecommand{\doi}[1]{doi: #1}\else
  \providecommand{\doi}{doi: \begingroup \urlstyle{rm}\Url}\fi

\bibitem[Ebadi et~al.(2024)Ebadi, Bernreiter, Biggie, Catt, Chang, Chatterjee, Denniston, Deschênes, Harlow, Khattak, Nogueira, Palieri, Petráček, Petrlík, Reinke, Krátký, Zhao, Agha-mohammadi, Alexis, Heckman, Khosoussi, Kottege, Morrell, Hutter, Pauling, Pomerleau, Saska, Scherer, Siegwart, Williams, and Carlone]{ebadi_present_2024}
Kamak Ebadi, Lukas Bernreiter, Harel Biggie, Gavin Catt, Yun Chang, Arghya Chatterjee, Christopher~E. Denniston, Simon-Pierre Deschênes, Kyle Harlow, Shehryar Khattak, Lucas Nogueira, Matteo Palieri, Pavel Petráček, Matěj Petrlík, Andrzej Reinke, Vít Krátký, Shibo Zhao, Ali-akbar Agha-mohammadi, Kostas Alexis, Christoffer Heckman, Kasra Khosoussi, Navinda Kottege, Benjamin Morrell, Marco Hutter, Fred Pauling, François Pomerleau, Martin Saska, Sebastian Scherer, Roland Siegwart, Jason~L. Williams, and Luca Carlone.
\newblock Present and {Future} of {SLAM} in {Extreme} {Environments}: {The} {DARPA} {SubT} {Challenge}.
\newblock \emph{IEEE Trans. on Robotics}, 40:\penalty0 936--959, 2024.
\newblock ISSN 1941-0468.
\newblock \doi{10.1109/TRO.2023.3323938}.

\bibitem[Tian et~al.(2022)Tian, Chang, Arias, Nieto-Granda, How, and Carlone]{tian2022kimera}
Yulun Tian, Yun Chang, Fernando~Herrera Arias, Carlos Nieto-Granda, Jonathan~P How, and Luca Carlone.
\newblock Kimera-multi: Robust, distributed, dense metric-semantic slam for multi-robot systems.
\newblock \emph{IEEE Transactions on Robotics}, 38\penalty0 (4), 2022.

\bibitem[Liu et~al.(2024)Liu, Lei, Prabhu, Tao, Spasojevic, Chaudhari, Atanasov, and Kumar]{liu_slideslam_2024}
Xu~Liu, Jiuzhou Lei, Ankit Prabhu, Yuezhan Tao, Igor Spasojevic, Pratik Chaudhari, Nikolay Atanasov, and Vijay Kumar.
\newblock {SlideSLAM}: {Sparse}, {Lightweight}, {Decentralized} {Metric}-{Semantic} {SLAM} for {Multi}-{Robot} {Navigation}, July 2024.
\newblock arXiv:2406.17249.

\bibitem[Lajoie and Beltrame(2024)]{lajoie_swarm-slam_2024}
Pierre-Yves Lajoie and Giovanni Beltrame.
\newblock Swarm-{SLAM} : {Sparse} {Decentralized} {Collaborative} {Simultaneous} {Localization} and {Mapping} {Framework} for {Multi}-{Robot} {Systems}.
\newblock \emph{IEEE Robotics and Automation Letters}, 9(1):\penalty0 475--482, January 2024.
\newblock ISSN 2377-3766, 2377-3774.
\newblock \doi{10.1109/LRA.2023.3333742}.

\bibitem[Schmuck et~al.(2021)Schmuck, Ziegler, Karrer, Perraudin, and Chli]{schmuck_covins_2021}
Patrik Schmuck, Thomas Ziegler, Marco Karrer, Jonathan Perraudin, and Margarita Chli.
\newblock {COVINS}: {Visual}-{Inertial} {SLAM} for {Centralized} {Collaboration}.
\newblock In \emph{2021 {IEEE} {Int.} {Symposium} on {Mixed} and {Augmented} {Reality} {Adjunct}}, pages 171--176, October 2021.
\newblock \doi{10.1109/ISMAR-Adjunct54149.2021.00043}.

\bibitem[Tian et~al.(2021)Tian, Khosoussi, Rosen, and How]{tian_distributed_2021}
Yulun Tian, Kasra Khosoussi, David~M. Rosen, and Jonathan~P. How.
\newblock Distributed {Certifiably} {Correct} {Pose}-{Graph} {Optimization}.
\newblock \emph{IEEE Trans. on Robotics}, 37(6):\penalty0 2137--2156, December 2021.
\newblock ISSN 1941-0468.
\newblock \doi{10.1109/TRO.2021.3072346}.

\bibitem[McGann et~al.(2024)McGann, Lassak, and Kaess]{mcgann_asynchronous_2024}
Daniel McGann, Kyle Lassak, and Michael Kaess.
\newblock Asynchronous {Distributed} {Smoothing} and {Mapping} via {On}-{Manifold} {Consensus} {ADMM}.
\newblock In \emph{IEEE Int. Conf. on Robotics and Automation}, pages 4577--4583, 2024.

\bibitem[Fan and Murphey(2024)]{fan_majorization_2024}
Taosha Fan and Todd~D. Murphey.
\newblock Majorization {Minimization} {Methods} for {Distributed} {Pose} {Graph} {Optimization}.
\newblock \emph{IEEE Trans.\ on Robotics}, 40:\penalty0 22--42, 2024.
\newblock ISSN 1941-0468.
\newblock \doi{10.1109/TRO.2023.3324818}.

\bibitem[Murai et~al.(2024)Murai, Ortiz, Saeedi, Kelly, and Davison]{murai_robot_2024}
Riku Murai, Joseph Ortiz, Sajad Saeedi, Paul H.~J. Kelly, and Andrew~J. Davison.
\newblock A {Robot} {Web} for {Distributed} {Many}-{Device} {Localization}.
\newblock \emph{IEEE Trans. on Robotics}, 40:\penalty0 121--138, 2024.
\newblock ISSN 1941-0468.
\newblock \doi{10.1109/TRO.2023.3324127}.

\bibitem[Tian et~al.(2020)Tian, Koppel, Bedi, and How]{tian_asynchronous_2020}
Yulun Tian, Alec Koppel, Amrit~Singh Bedi, and Jonathan~P. How.
\newblock Asynchronous and {Parallel} {Distributed} {Pose} {Graph} {Optimization}.
\newblock \emph{IEEE Robotics and Automation Letters}, 5(4):\penalty0 5819--5826, October 2020.
\newblock ISSN 2377-3766.
\newblock \doi{10.1109/LRA.2020.3010216}.

\bibitem[Bloch et~al.(1996)Bloch, Krishnaprasad, Marsden, and Ratiu]{bloch1996euler}
Anthony Bloch, PS~Krishnaprasad, Jerrold~E Marsden, and Tudor~S Ratiu.
\newblock The euler-poincar{\'e} equations and double bracket dissipation.
\newblock \emph{Communications in mathematical physics}, 175\penalty0 (1):\penalty0 1--42, 1996.

\bibitem[Duong et~al.(2024)Duong, Altawaitan, Stanley, and Atanasov]{duong2024port}
Thai Duong, Abdullah Altawaitan, Jason Stanley, and Nikolay Atanasov.
\newblock Port-hamiltonian neural ode networks on lie groups for robot dynamics learning and control.
\newblock \emph{IEEE Transactions on Robotics}, 2024.

\bibitem[Teng et~al.(2025)Teng, Jasour, Vasudevan, and Ghaffari]{teng2025convex}
Sangli Teng, Ashkan Jasour, Ram Vasudevan, and Maani Ghaffari.
\newblock Convex geometric motion planning of multi-body systems on lie groups via variational integrators and sparse moment relaxation.
\newblock \emph{The International Journal of Robotics Research}, 44\penalty0 (10-11):\penalty0 1784--1813, 2025.

\bibitem[Cunningham et~al.(2010)Cunningham, Paluri, and Dellaert]{cunningham_ddf-sam_2010}
Alexander Cunningham, Manohar Paluri, and Frank Dellaert.
\newblock {DDF}-{SAM}: {Fully} distributed {SLAM} using {Constrained} {Factor} {Graphs}.
\newblock In \emph{2010 {IEEE}/{RSJ} {Int.} {Conf.} on {Intelligent} {Robots} and {Systems}}, pages 3025--3030, 2010.
\newblock \doi{10.1109/IROS.2010.5652875}.

\bibitem[Cunningham et~al.(2013)Cunningham, Indelman, and Dellaert]{cunningham_ddf-sam_2013}
Alexander Cunningham, Vadim Indelman, and Frank Dellaert.
\newblock {DDF}-{SAM} 2.0: {Consistent} distributed smoothing and mapping.
\newblock In \emph{2013 {IEEE} {Int.} {Conf.} on {Robotics} and {Automation}}, pages 5220--5227, 2013.
\newblock \doi{10.1109/ICRA.2013.6631323}.

\bibitem[Choudhary et~al.(2017)Choudhary, Carlone, Nieto, Rogers, Christensen, and Dellaert]{choudhary_distributed_2017}
Siddharth Choudhary, Luca Carlone, Carlos Nieto, John Rogers, Henrik~I Christensen, and Frank Dellaert.
\newblock Distributed mapping with privacy and communication constraints: {Lightweight} algorithms and object-based models.
\newblock \emph{The Int. Journal of Robotics Research}, 36(12):\penalty0 1286--1311, October 2017.
\newblock ISSN 0278-3649, 1741-3176.
\newblock \doi{10.1177/0278364917732640}.

\bibitem[Rosen et~al.(2019)Rosen, Carlone, Bandeira, and Leonard]{rosen_se_sync_2019}
David~M Rosen, Luca Carlone, Afonso~S Bandeira, and John~J Leonard.
\newblock {SE-Sync}: A certifiably correct algorithm for synchronization over the special {Euclidean} group.
\newblock \emph{The Int. Journal of Robotics Research}, 38(2-3):\penalty0 95--125, 2019.
\newblock \doi{10.1177/0278364918784361}.

\bibitem[Briales and Gonzalez-Jimenez(2017)]{briales2017cartan}
Jesus Briales and Javier Gonzalez-Jimenez.
\newblock Cartan-sync: Fast and global se (d)-synchronization.
\newblock \emph{IEEE Robotics and Automation Letters}, 2\penalty0 (4):\penalty0 2127--2134, 2017.

\bibitem[Li et~al.(2024)Li, Guo, Yi, and Hong]{li_distributed_2024}
Cunhao Li, Guanghui Guo, Peng Yi, and Yiguang Hong.
\newblock Distributed {Pose}-{Graph} {Optimization} {With} {Multi}-{Level} {Partitioning} for {Multi}-{Robot} {SLAM}.
\newblock \emph{IEEE Robotics and Automation Letters}, 9(6):\penalty0 4926--4933, June 2024.
\newblock ISSN 2377-3766.
\newblock \doi{10.1109/LRA.2024.3382531}.

\bibitem[McGann and Kaess(2024)]{mcgann_imesa_2024}
D.~McGann and M.~Kaess.
\newblock {iMESA}: Incremental distributed optimization for collaborative simultaneous localization and mapping.
\newblock In \emph{Proc. Robotics: Science and Systems (RSS)}, Delft, {NL}, 2024.

\bibitem[B{\"a}nninger et~al.(2023)B{\"a}nninger, Alzugaray, Karrer, and Chli]{banninger2023cross}
Philipp B{\"a}nninger, Ignacio Alzugaray, Marco Karrer, and Margarita Chli.
\newblock Cross-agent relocalization for decentralized collaborative slam.
\newblock In \emph{2023 IEEE International Conference on Robotics and Automation (ICRA)}, pages 5551--5557. IEEE, 2023.

\bibitem[Liu and Chli(2026)]{liu2026cbs}
Xiangyu Liu and Margarita Chli.
\newblock Distributed pose graph optimization via contractive belief sharing.
\newblock \emph{IEEE Robotics and Automation Letters}, 2026.

\bibitem[Su et~al.(2016)Su, Boyd, and Candes]{su2016differential}
Weijie Su, Stephen Boyd, and Emmanuel~J Candes.
\newblock A differential equation for modeling nesterov's accelerated gradient method: Theory and insights.
\newblock \emph{Journal of Machine Learning Research}, 2016.

\bibitem[Nesterov(1983)]{nesterov1983method}
Yurii Nesterov.
\newblock A method for solving the convex programming problem with convergence rate o (1/k2).
\newblock In \emph{Soviet Mathematics Doklady}, 1983.

\bibitem[Wibisono et~al.(2016)Wibisono, Wilson, and Jordan]{wibisono2016variational}
Andre Wibisono, Ashia~C Wilson, and Michael~I Jordan.
\newblock A variational perspective on accelerated methods in optimization.
\newblock \emph{proceedings of the National Academy of Sciences}, 113\penalty0 (47), 2016.

\bibitem[Wilson et~al.(2021)Wilson, Recht, and Jordan]{wilson2021lyapunov}
Ashia~C Wilson, Ben Recht, and Michael~I Jordan.
\newblock A lyapunov analysis of accelerated methods in optimization.
\newblock \emph{Journal of Machine Learning Research}, pages 1--34, 2021.

\bibitem[Betancourt et~al.(2018)Betancourt, Jordan, and Wilson]{betancourt2018symplectic}
Michael Betancourt, Michael~I Jordan, and Ashia~C Wilson.
\newblock On symplectic optimization.
\newblock \emph{arXiv preprint arXiv:1802.03653}, 2018.

\bibitem[Duruisseaux and Leok(2023)]{duruisseaux2023practical}
Valentin Duruisseaux and Melvin Leok.
\newblock Practical perspectives on symplectic accelerated optimization.
\newblock \emph{Optimization Methods and Software}, 38\penalty0 (6):\penalty0 1230--1268, 2023.

\bibitem[Hairer et~al.(2006)Hairer, Lubich, and Wanner]{hairer2006structure}
Ernst Hairer, Christian Lubich, and Gerhard Wanner.
\newblock Structure-preserving algorithms for ordinary differential equations.
\newblock \emph{Geometric numerical integration}, 31, 2006.

\bibitem[Hauswirth et~al.(2016)Hauswirth, Bolognani, Hug, and D{\"o}rfler]{hauswirth2016projected}
Adrian Hauswirth, Saverio Bolognani, Gabriela Hug, and Florian D{\"o}rfler.
\newblock Projected gradient descent on riemannian manifolds with applications to online power system optimization.
\newblock In \emph{2016 54th Annual Allerton Conference on Communication, Control, and Computing (Allerton)}. IEEE, 2016.

\bibitem[Duruisseaux and Leok(2022)]{duruisseaux2022accelerated}
Valentin Duruisseaux and Melvin Leok.
\newblock Accelerated optimization on riemannian manifolds via discrete constrained variational integrators.
\newblock \emph{Journal of Nonlinear Science}, 32\penalty0 (4):\penalty0 42, 2022.

\bibitem[Golyanik et~al.(2016)Golyanik, Ali, and Stricker]{golyanik2016gravitational}
Vladislav Golyanik, Sk~Aziz Ali, and Didier Stricker.
\newblock Gravitational approach for point set registration.
\newblock In \emph{Proceedings of the IEEE conference on computer vision and pattern recognition}, 2016.

\bibitem[Jauer et~al.(2018)Jauer, Kuhlemann, Bruder, Schweikard, and Ernst]{jauer2018efficient}
Philipp Jauer, Ivo Kuhlemann, Ralf Bruder, Achim Schweikard, and Floris Ernst.
\newblock Efficient registration of high-resolution feature enhanced point clouds.
\newblock \emph{IEEE transactions on pattern analysis and machine intelligence}, 41\penalty0 (5):\penalty0 1102--1115, 2018.

\bibitem[Golyanik et~al.(2019)Golyanik, Theobalt, and Stricker]{golyanik2019accelerated}
Vladislav Golyanik, Christian Theobalt, and Didier Stricker.
\newblock Accelerated gravitational point set alignment with altered physical laws.
\newblock In \emph{Proceedings of the IEEE/CVF International Conference on Computer Vision}, pages 2080--2089, 2019.

\bibitem[Ali et~al.(2018)Ali, Golyanik, and Stricker]{ali2018nrga}
Sk~Aziz Ali, Vladislav Golyanik, and Didier Stricker.
\newblock Nrga: Gravitational approach for non-rigid point set registration.
\newblock In \emph{2018 International Conference on 3D Vision (3DV)}, pages 756--765. IEEE, 2018.

\bibitem[Zhao et~al.(2022)Zhao, Ma, Jia, Yan, and Huang]{zhao2022graphreg}
Mingyang Zhao, Lei Ma, Xiaohong Jia, Dong-Ming Yan, and Tiejun Huang.
\newblock Graphreg: Dynamical point cloud registration with geometry-aware graph signal processing.
\newblock \emph{IEEE Transactions on Image Processing}, 31:\penalty0 7449--7464, 2022.

\bibitem[Yang et~al.(2021)Yang, Doran, and Slotine]{yang2021dynamical}
Heng Yang, Chris Doran, and Jean-Jacques Slotine.
\newblock Dynamical pose estimation.
\newblock In \emph{Proceedings of the IEEE/CVF International Conference on Computer Vision}, pages 5926--5935, 2021.

\bibitem[Dellaert and {GTSAM Contributors}(2022)]{gtsam}
Frank Dellaert and {GTSAM Contributors}.
\newblock borglab/gtsam, May 2022.
\newblock URL \url{https://github.com/borglab/gtsam}.

\bibitem[Absil et~al.(2008)Absil, Mahony, and Sepulchre]{absil2008optimization}
P-A Absil, Robert Mahony, and Rodolphe Sepulchre.
\newblock \emph{Optimization algorithms on matrix manifolds}.
\newblock Princeton University Press, 2008.

\bibitem[Boumal et~al.(2019)Boumal, Absil, and Cartis]{boumal2019global}
Nicolas Boumal, Pierre-Antoine Absil, and Coralia Cartis.
\newblock Global rates of convergence for nonconvex optimization on manifolds.
\newblock \emph{IMA Journal of Numerical Analysis}, 39\penalty0 (1):\penalty0 1--33, 2019.

\bibitem[Lian et~al.(2018)Lian, Zhang, Zhang, and Liu]{lian2018asynchronous}
Xiangru Lian, Wei Zhang, Ce~Zhang, and Ji~Liu.
\newblock Asynchronous decentralized parallel stochastic gradient descent.
\newblock In \emph{International conference on machine learning}, pages 3043--3052. PMLR, 2018.

\bibitem[Dolan and Mor{\'e}(2002)]{dolan2002benchmarking}
Elizabeth~D Dolan and Jorge~J Mor{\'e}.
\newblock Benchmarking optimization software with performance profiles.
\newblock \emph{Mathematical programming}, 91\penalty0 (2):\penalty0 201--213, 2002.

\end{thebibliography}
